\documentclass{article}

\usepackage{microtype}
\usepackage{graphicx}
\usepackage{subcaption}
\usepackage{booktabs} 

\usepackage{hyperref}

\usepackage[accepted]{icml2026}

\usepackage{amsmath}
\usepackage{amssymb}
\usepackage{mathtools}
\usepackage{amsthm}

\usepackage[capitalize,noabbrev]{cleveref}

\theoremstyle{plain}
\newtheorem{theorem}{Theorem}[section]
\newtheorem{proposition}[theorem]{Proposition}
\newtheorem{lemma}[theorem]{Lemma}

\theoremstyle{definition}
\newtheorem{definition}[theorem]{Definition}

\theoremstyle{remark}

\usepackage{newfloat}
\usepackage{listings}

\usepackage{algorithm}
\usepackage{algorithmic}
\usepackage{adjustbox}
\usepackage{color}

\usepackage{bm}
\usepackage{enumitem}
\usepackage[most]{tcolorbox}  
\usepackage{amsmath,amsfonts}
\usepackage{float}

\usepackage{makecell}
\newcommand{\lunderline}{\underline{\hspace{0.6em}}}
\usepackage{mdframed}

\definecolor{myboxblue}{RGB}{45,115,215}
\usepackage{xspace}
\usepackage{xcolor}
\usepackage{colortbl}
\usepackage{array}
\usepackage{multirow}
\usepackage{booktabs}
\usepackage{amssymb}
\usepackage{pifont}  

\newcommand{\M}{AOEPT\xspace}

\usepackage{cleveref}

\usepackage{algorithm}
\usepackage{algorithmic}

\usepackage{color}

\usepackage{bm}
\usepackage{enumitem}

\usepackage{amsmath,amsfonts}
\usepackage{amssymb}
\usepackage{float}

\usepackage{makecell}
\usepackage{xparse}
\usepackage{xstring}
\usepackage{xcolor}

\definecolor{tablecolor}{RGB}{255, 236, 243}

\NewDocumentCommand{\rp}{O{comment} m}{%
  \IfEqCase{#1}{%
    {comment}{\textcolor{violet}{\textbf{[@RP COMMENT: #2]}}}%
    {done}{\textcolor{green}{\textbf{@RP DONE: [#2]}}}%
    {todo}{\textcolor{blue}{\textbf{@RP TODO: [#2]}}}%
  }[\textcolor{gray}{#2}]%
}
\usepackage[textsize=tiny]{todonotes}

\icmltitlerunning{AOEPT: Breaking the Implicit Modality-Reduction Bottleneck in Modality-Missing Prompt Tuning}

\begin{document}

\twocolumn[

\icmltitle{AOEPT: Breaking the Implicit Modality-Reduction Bottleneck in Modality-Missing Prompt Tuning}



  \icmlsetsymbol{equal}{*}

  \begin{icmlauthorlist}
    \icmlauthor{Jian Lang}{1}
    \icmlauthor{Rongpei Hong}{1}
    \icmlauthor{Ting Zhong}{1}
    \icmlauthor{Fan Zhou}{1,2}
  \end{icmlauthorlist}

  \icmlaffiliation{1}{University of Electronic Science and Technology of China, Chengdu, Sichuan}
  \icmlaffiliation{2}{Intelligent Digital Media Technology Key Laboratory of Sichuan Province, Chengdu, Sichuan}

  \icmlcorrespondingauthor{Ting Zhong}{zhongting@uestc.edu.cn}

  \icmlkeywords{Machine Learning, ICML}

  \vskip 0.3in
]



\printAffiliationsAndNotice{}  


\begin{abstract}
Deploying multimodal systems in real-world environments often entails handling modality-missing scenarios, where one or more modalities are unavailable. 
While recent studies address this challenge for the general Multimodal Transformer (MT) architecture via prompt tuning, we identify a fundamental limitation in these methods: the \textit{Implicit Modality-Reduction bottleneck}.  
By conditioning prompts solely on the observed modalities, they inadvertently restrict the reasoning scope of MTs to the modality-reduced subspace, 
\textit{cutting off} access to the latent information sources of the missing modalities.
To overcome this limitation, we propose \textbf{\M}, which pioneers a novel \textit{modal-contextualized prompting} fashion.
Specifically, we introduce lightweight \textit{Modal-Contextualized Prompts (MCPs)} that distill global modality-wise priors from training data, serving as latent repositories of the information sources for missing modalities.
Conditioned on the remaining modalities, these MCPs are instantiated into instance-aware prompts that selectively augment missing-modality information for each sample, thereby restoring the reasoning scope of MTs beyond the observed-modality-only subspace.
Experiments across various multimodal benchmarks and backbones confirm the strong performance of \M, with minimal computational overhead.

\end{abstract}
\section{Introduction}
Multimodal learning, which mimics the way humans perceive and understand the real world through the integration of heterogeneous information sources, such as visual, linguistic, and acoustic signals, has emerged as a central research problem~\cite{yuan2025survey}.
Existing methods often assume the 
data in both training and deployment phases is modality-complete.
However, in real-world scenarios, multimodal systems often operate under in-the-wild, dynamic, and noisy conditions~\cite{lang2026radar,hong2025borrowing,li2026shedding}, where extreme situations such as sensor failures and transmission errors can render certain modalities unavailable, severely degrading their practical utility~\cite{ma2021smil,li2025simmlm}.
Consequently, developing robust systems that can maintain reliability under modality-missing scenarios is of critical importance for practicality.

\begin{figure}[t]

\centering
\includegraphics[width=\linewidth]{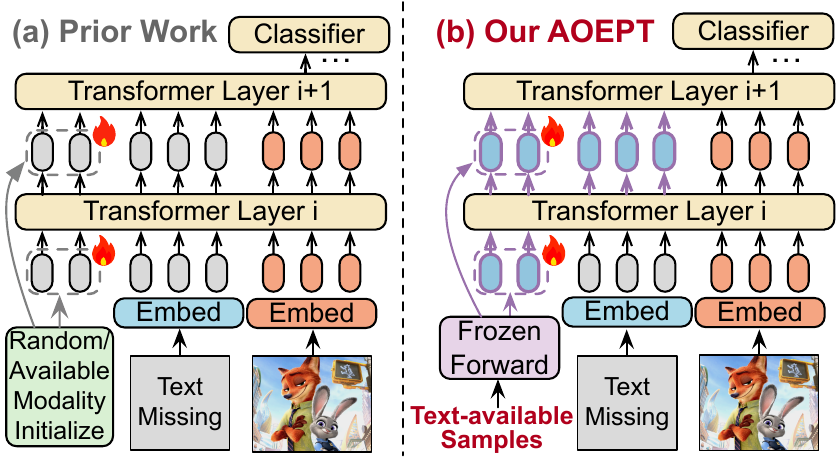}

\vspace{-2mm}
\caption{Paradigm comparison for an image-only sample between (a) Prior Work, which falls into the unimodal prediction bottleneck, and (b) Our \M, which explicitly breaks such bottleneck. 
}
\label{fig:intro}
\vspace{-3mm}
\end{figure}

Traditional modality missing learning methods, including unified multimodal learning approaches~\cite{zhao2021missing} and modality imputation models~\cite{cai2018deep,ma2021smil}, heavily rely on customized model architectures to handle missing modalities, which limits their generalizability and flexibility across a wide range of multimodal tasks~\cite{xu2023multimodal}.
\textit{Recently}, Multimodal Transformers (MTs), which adopt a unified and general architecture in processing multimodal data, have become a dominant choice for a wide range of applications (e.g., visual question answering~\cite{marouf2025ask}, Multimodal Large Language Model (MLLM)~\cite{bai2025qwenvl}).
As a result, addressing modality-missing problems in MTs has attracted increasing attention from recent works~\cite{ma2022multimodal,zhao2025enhancing}.
Current approaches often adopt parameter-efficient prompt tuning strategies~\cite{jia2022visual}, where only a set of learnable prompts is employed to adapt the frozen pretrained MTs to 
the incomplete multimodal inputs.
MAPs~\cite{lee2023multimodal} pioneered the missing-aware prompts for MTs in tackling incomplete samples.
Subsequent studies, such as DCP~\cite{hu2024deep} and MemPrompt~\cite{zhao2025enhancing}, further refined prompt design along various perspectives, such as sample-specific prompts and cross-modal shared prompts, leading to progressively improved performance.
Then, a natural question arises: \textit{do existing prompt-tuning methods fully tap into the potential of prompts for addressing modality-missing challenges in MTs?}

As illustrated in~\Cref{fig:intro} (a), we provide a \textbf{critical rethinking} of the paradigms in these prompting methods: 
Their prompts are often either \textit{randomly initialized} (e.g., MAPs, MemPrompt) or \textit{initialized using remaining available modalities} in incomplete samples to become sample-specific (e.g., DCP).
Consequently, they can be regarded as merely treating prompts as learnable signals to fine-tune MTs for accommodating \textit{degraded} and \textit{modality-reduced} input structures, 
followed by a direct mapping from such incomplete observations to labels.
\textit{Worse still}, when a dual-modal sample suffers from missing modalities, these methods force MTs to reason solely within the unimodal space, therefore degrading a multimodal problem into a unimodal one, falling into the following bottleneck:

\begin{mdframed}[linewidth=0.8pt, linecolor=black, backgroundcolor=myboxblue!10, roundcorner=6pt, innerleftmargin=6pt, innerrightmargin=6pt, innertopmargin=7pt, innerbottommargin=5pt]
\textbf{Implicit Modality-Reduction (IMR) Bottleneck:} 
Existing prompt tuning mechanism inadvertently constrains the reasoning scope of MTs to the modality-reduced subspace, 
failing to fully trigger the strong multimodal modeling capacity of MTs learned during pretraining.
\end{mdframed}
To understand and alleviate this bottleneck, we conduct a very simple pilot experiment (cf.~\Cref{subsec:pilot-exp}), where the randomly initialized prompts for text- or image-missing samples in baseline MAPs are instead initialized using the global text or image information from training samples.
And we observe a performance improvement.

In light of these observations, we propose \textbf{\M}, a novel missing-adaptive mod\textbf{\underline{A}}l-c\textbf{\underline{O}}nt\textbf{\underline{E}}xtualized prom\textbf{PT}ing framework that shifts the paradigm from adapting MTs to the degradation to active compensation.
As illustrated in~\Cref{fig:intro}, \M overcomes the Implicit Modality-Reduction bottleneck with an effective albeit minimalist prompting fashion.
Specifically, \M first forwards the training samples (including both complete and modality-missing ones) through the frozen MTs, and reorganizes the resulting layer-wise token representations into modality-specific information collections.
Subsequently, a set of lightweight Modal-Contextualized Prompts (\textbf{MCPs}) is introduced to condense and distill the corresponding modality information from these collections.
As a result, the MCPs serve as \textit{modality-level latent repositories} that depict the global contextual information and distribution for each modality.
When handling incomplete samples, \M adaptively fetches MCPs considering the specific missing patterns (e.g., image missing) in different samples, and instantiates them into \textbf{instance-aware prompts} conditioned on the remaining observed modalities.
This instantiation process projects the modality-level representations into instance-specific space, selectively activating modality information most relevant to the current sample, and is further refined through a intra-modal latent consistency regularization.
Finally, these prompts are inserted into the MTs to explicitly supplement the missing-modality information for each sample, effectively surmounting the Implicit Modality-Reduction Bottleneck.
The contributions of this study are as follows:
\begin{itemize} [leftmargin=*, topsep=2pt, partopsep=0pt, itemsep=0pt]

\item We revisit existing modality-missing prompt-tuning methods and identify the \textbf{Implicit Modality-Reduction (IMR)} bottleneck: they unintentionally confine the reasoning scope of MTs to modality-reduced subspace, cutting off access to latent information sources of missing modalities.

\item We propose a conceptually novel solution \textbf{\M}. It explicitly restores access to the information repositories of missing modalities via an efficient modal-contextualized prompting fashion, expanding the reasoning scope of MTs beyond that constituted by the remaining modalities.

\item Experiments on diverse benchmarks show the efficacy of \M.
Moreover, we introduce a new metric, namely Normalized Missing-modality Mutual Information (\textbf{NM\textsuperscript{2}I}), to diagnose the severity of the IMR bottleneck.
Furthermore, we empirically reveal a \textbf{modality information scaling bottleneck}, where performance of existing methods plateaus even as training conditions improve with more available information from the modality that missing at test time, while \M can benefit from such additional information.
Code is in \url{https://github.com/Jian-Lang/AOEPT}.

\end{itemize}

\section{Related Work: Modality Missing Learning}
\label{sec:related-work}
Modality missing learning focuses on developing models that are robust to incomplete multimodal data encountered during deployment~\cite{ma2021smil,wu2024deep}.
Early studies can be broadly divided into two categories:
(1) Unified multimodal learning methods~\cite{wang2023multi,zhao2021missing}, which learn shared multimodal representations 
and leverage these shared representations to handle incomplete inputs,
(2) Modality imputation methods~\cite{cai2018deep,ma2021smil}, which attempt to generate missing modalities from the remaining ones using sophisticated cross-modal reconstruction networks.
Despite their effectiveness, these methods rely heavily on architecture-specific model designs to address modality-missing issues, which limits their applicability across a wide range of multimodal downstream tasks~\cite{xu2023multimodal}.
Recently, with the prevalence of Multimodal Transformer (MT) as a general architecture across diverse multimodal tasks, many studies have been devoted to enhancing the robustness of MTs under modality-missing scenarios~\cite{ma2022multimodal,lee2023multimodal,hu2024deep}.
They develop various prompt tuning strategies~\cite{jia2022visual} to efficiently fine-tune the MTs in handling the incomplete multimodal data.
MAPs~\cite{lee2023multimodal} was the first work to employ missing-aware prompts in tuning MTs to adapt to the missing modalities.
Subsequently, MSPs~\cite{jang2024towards} reduced the number of prompts in MAPs to modality-wise ones, while DCP~\cite{hu2024deep}, MemPrompt~\cite{zhao2025enhancing}, and SyP~\cite{zhang2025synergistic} refined the prompts to be sample-aware, memory-driven, and cross-modality shared for improved robustness.
Nevertheless, these methods can be regarded as simply leveraging prompts to signal MTs in adapting to the degraded multimodal input structures, which fall into the bottleneck of Implicit Modality-Reduction (IMR).

Although retrieval-based prompt-tuning studies such as RAGPT~\cite{lang2025retrievalaugmented} and REDEEM~\cite{lang2025redeeming} attempt to inject external multimodal evidence into prompts or reconstruct missing modalities, they do not identify the IMR bottleneck inherent in current prompt-tuning paradigms. 
As a result, their solutions rely on \textit{external retrieval and reconstruction modules}, rather than maintaining standard lightweight prompt-tuning paradigm, leading to substantial training and inference overhead.
Moreover, this dependence on external static retrieval may introduce high variance in sample-wise evidence during training, and make the compensated multimodal information vulnerable to noisy or mismatched retrieved instances at inference time.
In contrast, \M realizes an \textbf{implicit and internalized} self-retrieval mechanism, where global modality-wise contextual knowledge is first distilled into lightweight MCPs and then projected into instance-specific prompts conditioned on the observed modalities, realizing an efficient and noise-resilient prompt-tuning paradigm.

\section{Methodology}
\label{sec:method}

\subsection{Preliminary}

\noindent \textbf{Rethinking of Existing Prompting Methods.}
To simplify the formulation without loss of generality, we consider a dual-modal multimodal task, where each data ${x}$ contains text and image modalities $t$ and $v$.
The dataset $\mathcal{D}$ contains three types of data, where $x = (t,v)$ is the modality-complete one, and $x = (t,\lunderline)$ and $x = (v, \lunderline)$ denote the text-only and image-only data.
For clarity, we consider the single-stream MT, $F_\theta(\cdot)$, which can be simplified as a stack of $L$ transformer encoder layers: $F_\theta(\cdot) = f^L_{\theta} \circ f^{L-1}_{\theta} \circ \cdots \circ f^1_{\theta}(\cdot)$, with each layer $f^{i}_{\theta}(\cdot)$ taking the concatenation of multimodal information as input and performs self-attention.\footnote{The dual-stream MT implementation is in Appendix~\ref{app:imple_dual_MT}.}
Existing methods incorporate a set of learnable prompts into the frozen encoder layers of MT and optimize the prompts to enhance modality-missing robustness of the MT:
\begin{equation}
\label{eq:1}
    \arg\min_{C_\phi,\mathcal{G}_\psi}\;
\mathbb{E}_{(x,y){\sim\mathcal{D}}}
\;[L(C_\phi(F_{\theta}(x; \mathcal{G}_\psi(z))),\, y)],
\end{equation}
where $C_\phi(\cdot)$ is the task-specific classification head, $L(\cdot)$ is the task objective (e.g., Cross-Entropy $L_\text{CE}$).
$\mathcal{G}_\psi(\cdot)$ is the prompt construction function, which takes a conditional signal $z$ to drive the corresponding prompts generation, and can be used as a unifying formulation for existing prompting methods.
However, these methods often either randomly initialize prompts, where the signal $z$ can be ignored or reduced to coarse input-structure indicators (e.g. image-only structure)~\cite{lee2023multimodal,jang2024towards}, or generate sample-specific prompts by using the available modalities in incomplete samples, i.e., $z \triangleq (t,\lunderline)$ or $z \triangleq (v,\lunderline)$~\cite{hu2024deep,zhao2025enhancing}.
As a result, they can be cast as leveraging $\mathcal{G}_\psi(z)$ to adapt MTs to degraded and incomplete input structures, and MT's reasoning scope on modality-missing samples is inherently bounded to the subspace of the remaining modalities (which we refer to as Implicit Modality-Reduction (IMR) bottleneck).

\noindent \textbf{Workflow of \M.}
To overcome the IMR bottleneck, we propose \M.
\M explicitly and adaptively augments the MTs with missing-modality information through a novel and lightweight modal-contextualized prompting fashion.
Specifically, a set of Modal-Contextualized Prompts (MCPs) is constructed to distill the global modality-level contextual information from the training set (\textbf{\Cref{subsec:mcp}}).
Subsequently, these prompts are instantiated into instance-aware ones by conditioning on the remaining observed modalities, activating the information most relevant to the missing modalities for each data instance (\textbf{\Cref{subsec:prompt-instantiation}}).
Finally, the resulting prompts are adaptively inserted into MTs for prompt tuning, breaking the confinement of the modality-reduced subspace and overcoming the IMR problem (\textbf{\Cref{subsec:prompt-tuning}}).
The workflow of \M is in~\Cref{fig:method}.

\begin{figure*}[t]
\centering
\includegraphics[width=\textwidth]{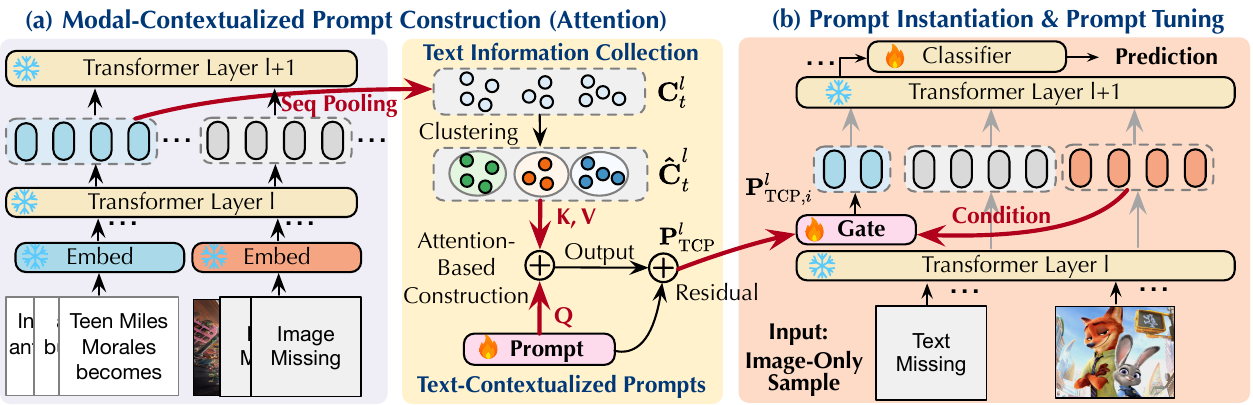}
\vspace{-5mm}
\caption{Workflow of \M. (a) The TCPs are constructed from layer-wise inferred text-modal collections obtained via frozen forward passes on text-available samples through the MTs.
(b) The TCPs are then projected into instance-aware ones conditioned on the remaining modalities, activating sample-specific informative cues associated with the missing modality for the MTs via the prompt tuning.
}
\label{fig:method}
\vspace{-5mm}
\end{figure*}

\subsection{Modal-Contextualized Prompt Construction}
\label{subsec:mcp}
To alleviate the IMR bottleneck in existing methods, we empirically observe that, replacing randomly initialized prompts with text or image information from the training set as ``informative priors'' leads to clear performance improvements for MTs under text or image-missing scenarios (cf.~\Cref{subsec:pilot-exp}).
Inspired by this, we propose a set of Modal-Contextualized Prompts (\textbf{MCPs}), which distill the modality-specific global contextual information and distribution from the training set.
Specifically, taking the Text-Contextualized Prompts (\textbf{TCPs}) construction as an example, we first feed the $N_t$ text-available training samples (i.e., both modality-complete and text-only ones) into the $L$ frozen MT encoder layers, and the resulting inferred layer-wise tokens form the text-specific information collections:
\begin{equation}
\label{eq:2}
   \mathbf{C}^{l}_t = \{\mathbf{t}^{l}_{1},\mathbf{t}^{l}_{2},\cdots,\mathbf{t}^{l}_{N_t}\}, \; \mathbf{t}^{l}_{i},\lunderline = \operatorname{Pool}(F_\theta^l(x_i)),
\end{equation}
where $\mathbf{C}^{l}_t$ is the text-specific information collection derived from the $l$-th encoder layer, $l\in[0,L-1]$, with each element $\mathbf{t}^{l}_{i} \in \mathbb{R}^{d}$ is the sequence-pooled text token representation of each text-available sample $x_i$, $F^l_\theta(\cdot) = f^{l}_{\theta} \circ \cdots \circ f^1_{\theta}(\cdot)$, $d$ is the feature dimension, $\operatorname{Pool}(\cdot)$ is the average pooling operation, and $\lunderline$ represents that image modality is ignored in this process.
When $l = 0$, each $\mathbf{t}^{0}_{i}$ is derived from the embedding layer of MT.
Nevertheless, the number of text-available samples $N_t$ can still be prohibitively large.
To further reduce the collection size for efficiency, inspired by~\cite{zhang2022prompting}, we group the token representations from collection $\mathbf{C}^{l}_t$ into $N'_t$ semantic prototypes with K-means clustering that capture fine-grained text-level distributions:
\begin{equation}
\label{eq:3}
\arg\min_{S_i}
\sum_{i=1}^{N'_t} 
\sum_{\mathbf{t}^{l}_j \in S_i}
\big\|\mathbf{t}^{l}_j - \mathbf{\hat{t}}^{l}_i\big\|^2,\;
\hat{\mathbf{t}}^{l}_i = \frac{1}{|S_i|}
\sum_{\mathbf{t}^{l}_j \in S_i} \mathbf{t}^{l}_j,
\end{equation}
where $\mathbf{\hat{t}}^l_i$ denotes $i$-th refined token representation (prototype) and $S_i$ is $i$-th cluster set.
The refined collection is then formalized as $\mathbf{\hat{C}}^{l}_t = \{\mathbf{\hat{t}}^{l}_{1},\mathbf{\hat{t}}^{l}_{2},\cdots,\mathbf{\hat{t}}^{l}_{N'_t}\}$, where $N'_t$ satisfies $N'_t \ll N_t$.
Subsequently, we propose three construction methods of TCPs in distilling the global text-specific contextual information from the collections. 
To simplify the following discussion, we focus on the $l$-layer prompts construction, where $l \in [1,L]$, and we define $n$: $n = l-1$.

\noindent \textbf{Attention-based Construction Method.}
We first randomly initialize a set of $M$ learnable prompts $\mathbf{P}^l = \{\mathbf{P}^l_{1}, \dots, \mathbf{P}^l_{M}\} \in \mathbb{R}^{M \times d}$.
We then leverage $\mathbf{P}^l$ as the query to condense the text-specific information from $\mathbf{C}^{n}_t$ via a cross-attention operation with a residual connection:
\begin{align} \label{eq:4}
    \mathbf{P}_{\text{TCP}}^{l} =  \text{Attn}(\mathbf{P}^l, [\mathbf{\hat{t}}^{n}_{1}&,\cdots,\mathbf{\hat{t}}^{n}_{N'_t}], [\mathbf{\hat{t}}^{n}_{1},\cdots,\mathbf{\hat{t}}^{n}_{N'_t}]) + \mathbf{P}^l,
     \\ \label{eq:5}
  \text{Attn}(\mathbf{Q}, \mathbf{K}, \mathbf{V})
&=
\operatorname{Softmax}(
\frac{\mathbf{Q}\mathbf{K}^{\top}}{\sqrt{d}}
)
\mathbf{V},
\end{align}
where $\mathbf{P}_{\text{TCP}}^{l} \in \mathbb{R}^{M\times d}$ denotes the $l$-layer TCP with length $M$, and $[,]$ is the concatenation operation.
In addition, we further introduce two alternative construction methods with more or less computational overhead for MCP construction.

\noindent \textbf{MLP-based Construction Method.} 
We apply a Multi-Layer Perceptron (MLP) to the text collection, followed by an adaptive pooling~\cite{guo2025swam} to form the TCPs:
\begin{equation}
\label{eq:6}
\mathbf{P}_{\text{TCP}}^{l}
=
\Phi_{\text{pooling}}^{(M)}
\!\left(
\operatorname{MLP}
\big(
[\hat{\mathbf{t}}^{\,n}_{1}, \cdots, \hat{\mathbf{t}}^{\,n}_{N'_t}]
\big)
\right),
\end{equation}
where $\mathbf{P}_{\text{TCP}}^{l} \in \mathbb{R}^{M \times d}$,
$\Phi_{\text{pooling}}^{(M)}(\cdot)$ denotes a non-overlapping sliding-window based adaptive pooling operator that aggregates the input sequence into $M$ output tokens, and $\operatorname{MLP}:\mathbb{R}^{d} \rightarrow \mathbb{R}^{d}$ represents the MLP.

\noindent \textbf{Initialization-based Construction Method.}
To further reduce runtime cost, we directly apply adaptive pooling to the refined collections and use the pooled representations as the initialization (starting point) of the learnable TCPs:
\begin{equation}
\label{eq:7}
\mathbf{P}_{\text{TCP}}^{l}(0)
\;\!:=\;
\Phi_{\text{pooling}}^{(M)}\!\big(
[\hat{\mathbf{t}}^{l-1}_{1}, \dots, \hat{\mathbf{t}}^{l-1}_{N'_t}]
\big),
\end{equation}
Here, $\mathbf{P}_{\text{TCP}}^{l}(0)$ denotes the prompt tokens for layer $l$ at initialization, which are treated as learnable parameters optimized via gradient descent during tuning.

\textbf{Discussion.} Compared to the attention-based construction, the MLP-based method introduces non-linear transformations over the refined modality collection, offering an \textit{expressive} but more costly alternative.
whereas the Initialization-based method achieves the most lightweight design.
We adopt the Attention-based construction as the default, while providing an empirical evaluation on the efficiency and performance of three construction methods in~\Cref{subsec:prompt-design}.

At this stage, the TCPs act as a \textit{latent text-specific repository} that can provide MT with global contextual information of the text modality, therefore restoring MT's reasoning scope from the image-only subspace and alleviating Implicit Modality-Reduction bottleneck caused by text missing.

\subsection{Instance-Aware Prompt Instantiation}
\label{subsec:prompt-instantiation}
After deriving the MCPs, a natural approach is to feed these prompts into MTs for incomplete inputs to complement missing-modality information.
However, since MCPs capture the global, modality-level distribution, they are required to be further refined to adapt to each sample.
Specifically, for an image-only sample $x_i = (v_i, \lunderline)$, we leverage its remaining modality, $v_i$, as the condition to perform the instance-aware prompt instantiation, \textbf{selectively activating} modality-specific information stored in the TCPs that is most relevant for each sample $x_i$:
\begin{equation}
\label{eq:8}
\mathbf{P}_{\text{TCP},i}^{l}
\triangleq
\mathcal{I}\!\left(\mathbf{P}_\text{TCP}^{l} \mid v_i \right)
=
\mathbf{P}_\text{TCP}^{l} \odot 
\sigma\!\left(\operatorname{MLP}(\bar{\mathbf{V}}^{l-1}_{i})\right),
\end{equation}
where $\mathbf{P}_{\text{TCP},i}^{l}$ represents the instantiated, instance-aware TCP for $x_i$, $\mathbf{\bar{V}}_{i}^{l-1} \in \mathbb{R}^{d}$ is the sequence-pooled image hidden representations of sample $x_i$ yielded from encoder layer-($l-1$) 
(when $l=1$, the $\mathbf{\bar{V}}_{i}^{0}$ is from the embedding layer), 
$\sigma$ is sigmoid function, 
$\odot$ is the element-wise product.

\begin{figure}[t]

\centering
\includegraphics[width=\linewidth]{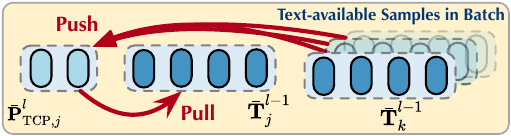}

\vspace{-1mm}
\caption{The intra-modal latent consistency regularization.}
\label{fig:method-2}
\vspace{-3mm}
\end{figure}

To further filter out the relevant text-modal information for each sample in a fine-grained manner, 
we introduce an \textbf{intra-modal latent consistency regularization} constraint (\Cref{fig:method-2}) applied only to \textit{text-available} training sample $x_j$:
\begin{equation}
\label{eq:9}
    L_{\text{CR}} = -\log \frac{\exp(\text{sim}(\bar{\mathbf{P}}_{\text{TCP},j}^{l}, \mathbf{\bar{T}}_{j}^{l-1}) / \tau)}{\sum_{k=1}^{B} \exp( \text{sim}(\bar{\mathbf{P}}_{\text{TCP},j}^{l}, \bar{\mathbf{T}}_{k}^{l-1}) / \tau )},
\end{equation}
where $\bar{\mathbf{P}}_{\text{TCP},j}^{l}\in \mathbb{R}^d$ is the pooled instance-aware TCP for text-available sample $x_j$,  $\bar{\mathbf{T}}_{j}^{l-1} \in \mathbb{R}^d$ is the pooled text representation from layer-($l-1$) for $x_j$, and $\bar{\mathbf{T}}_{k}^{l-1} \in \mathbb{R}^d$ is the text representation for sample $x_k$ in current batch.
The detailed derivation of $L_\text{CR}$ is provided in Appendix~\ref{app:intra-modal-cr}.

\subsection{Missing-Adaptive Prompt Tuning}
\label{subsec:prompt-tuning}
When handling an image-only sample $x_i$, we first adaptively fetch the corresponding layer-wise MCPs, TCPs $\mathbf{P}_{\text{TCP}}^{l}$ in this situation, and instantiate the TCPs into instance-aware ones $\mathbf{P}_{\text{TCP},i}^{l}$.
We then perform the prompt tuning using these instance-aware prompts for $x_i$.
Specifically, for the first $N$ encoder layers of MT, we perform the prompt tuning while dropping the prompts propagated from the prior layer:
\begin{equation}
\label{eq:10}
    \lunderline, \, \mathbf{H}^l_{i}  = f^l_{\theta}([\mathbf{P}_{\text{TCP},i}^{l},\mathbf{H}^{l-1}_{i}]), \; l \in [1,N],
\end{equation}
where $\mathbf{H}^l_{i}$ is the hidden representations from $l$-th MT layer, $\lunderline$ indicates that the prompts from prior layers are discarded.
In the remaining layers, the prompts $\mathbf{P}_{\text{TCP},i}^{l}$ are no longer newly initialized for each layer.
Instead, they are inherited from the previous layer and propagated to subsequent layers:
\begin{equation}
\label{eq:11}
    \mathbf{P}_{\text{TCP},i}^{{l+1}}, \, \mathbf{H}^l_{i}  = f^l_{\theta}([\mathbf{P}_{\text{TCP},i}^{l},\mathbf{H}^{l-1}_{i}]), \; l \in [N+1,L].
\end{equation}
Finally, the last layer hidden representation of $x_i$, $\mathbf{H}^L_{i}$, is input into the classifier $C_\phi(\cdot)$ (e.g., an MLP), to derive the final prediction: $\hat{y}_i = C_\phi(\mathbf{H}^{L}_{i})$.
During training, only the MCPs and the classification head in \M are tuned, using both the $L_\text{CR}$ and the $L_\text{CE}$.
Algorithm of \M is in~\Cref{alg:method}. 
Notably, \M is readily extended to scenarios with more modalities situations, 
incurring only linear overhead with more modalities (cf.~Appendix~\ref{app:extension_to_multiple}).
Although \M is conceptually new and different from existing prompting methods,
it introduces \textit{no additional training data assumptions} beyond them.
The efficiency and complexity analysis of \M are in~\Cref{subsec:efficiency} and Appendix~\ref{app:complex_analysis}, and a mathematical analysis of IMR is in~Appendix~\ref{app:mathmetical}.

\begin{algorithm}[t]
\caption{Algorithm of \M (TCP as an example).}
\label{alg:method}
\renewcommand{\algorithmicrequire}{\textbf{Input:}}
\renewcommand{\algorithmicensure}{\textbf{Output:}}
\begin{algorithmic}[1]

\REQUIRE Frozen MT $F_{\theta}$ with $L$ layers $f^{*}_\theta$; training set $\mathcal{D}$.
\ENSURE Prediction $\hat{y}_i$ for sample $x_i$.

\STATE Get text-specific token representations $\mathbf{C}^{*}_t$ from $F_{\theta}$ over text-available samples in $\mathcal{D}$, and apply clustering to obtain refined layer-wise collections $\mathbf{C}^{*}_t$ (Eq.~(\ref{eq:2}) -- (\ref{eq:3})).
\STATE Construct layer-wise TCPs ${\mathbf{P}}_{\text{TCP}}^{*}$  from $\mathbf{C}^{*}_t$ using one of the prompt construction methods (Eq.~(\ref{eq:4}) -- (\ref{eq:7})).

\FOR{$l = 1$ to $N$}
    \STATE Derive instance-aware ${\mathbf{P}}_{\text{TCP},i}^{l}$ using modality $v_i$, and apply consistency regularization $L_{\text{CR}}$ (Eq.~(\ref{eq:8}) -- (\ref{eq:9})). 
    \STATE Insert ${\mathbf{P}}_{\text{TCP},i}^{l}$ into the MT encoder layer $f^{l}_\theta(\cdot)$ and perform prompt tuning (Eq.(\ref{eq:10})).
\ENDFOR
\STATE Apply prompt tuning from layers $N+1$ to $L$ (Eq.(\ref{eq:11})). 
\STATE Get $\hat{y}_i = C_\phi(\mathbf{H}_i^L)$ and update ${\mathbf{P}}_{\text{TCP},i}^{*}$ via $L_{\text{CR}}$ and $L_{\text{CE}}$.

\end{algorithmic}
\end{algorithm}

\begin{table}[t]
\vspace{-2mm}
  \centering
  \setlength{\tabcolsep}{5.5pt}
  \renewcommand{\arraystretch}{0.88}
  \caption{Statistics of three multimodal benchmarks.}
  \vspace{-2mm}
  \label{tab:dataset}
  \resizebox{\linewidth}{!}{
  \begin{tabular}{lcccccc}
    \toprule
    \textbf{Dataset} & \#~\textbf{Image} & \#~\textbf{Text} & \#~\textbf{Train} & \#~\textbf{Val} & \#~\textbf{Test} & \#~\textbf{Class} \\
    \midrule
    MM-IMDb & 25,959 & 25,959 & 15,552 & 2,608 & 7,799 & 23 \\
    HateMemes & 10,000 & 10,000 & 8,500 & 500 & 1,500 & 2 \\
    Food101 & 90,688 & 90,688 & 61,174 & 6,798 & 22,716 & 101 \\
    \bottomrule
  \end{tabular}
  }
  \vspace{-3mm}
\end{table}

\begin{table*}[t]  
\caption{Performance (\%) of prompt tuning baselines and \M on three datasets under 70\% and 90\% missing rates across diverse missing scenarios. 
The best results are in \textbf{bold} and the second are \underline{underlined}. 
LB denotes the (lower-bound) performance of MT.} 
\vspace{-2mm}
\centering  
\setlength{\tabcolsep}{5pt}
\renewcommand{\arraystretch}{1.15}
\resizebox{\textwidth}{!}{
\begin{tabular}{lccccccccccccc}  
\toprule  
 &  & \multicolumn{4}{c}{\textbf{MM-IMDb}} & \multicolumn{4}{c}{\textbf{HateMemes}} & \multicolumn{4}{c}{\textbf{Food101}} \\  
\cmidrule(lr){3-6} \cmidrule(lr){7-10} \cmidrule(lr){11-14}  
 &  & \textbf{Text} & \textbf{Image} & \textbf{Both} & \textbf{Avg.} 
    & \textbf{Text} & \textbf{Image} & \textbf{Both} & \textbf{Avg.} 
    & \textbf{Text} & \textbf{Image} & \textbf{Both} & \textbf{Avg.} \\  
    \cmidrule(lr){3-3} \cmidrule(lr){4-4} \cmidrule(lr){5-5} \cmidrule(lr){6-6}
\cmidrule(lr){7-7} \cmidrule(lr){8-8} \cmidrule(lr){9-9} \cmidrule(lr){10-10}
\cmidrule(lr){11-11} \cmidrule(lr){12-12} \cmidrule(lr){13-13} \cmidrule(lr){14-14}
\textbf{Methods} & \textbf{Venue} 
& F1-M & F1-M & F1-M &  F1-M
& AUC & AUC & AUC &  AUC
& ACC & ACC & ACC & ACC \\    
\midrule

\rowcolor{gray!10}
LB \text{\textbf{(CLIP, Missing Rate 70\%)}} & N/A
& 47.22 & 51.32 & 49.53 & 49.36
& 62.70 & 62.39 & 62.53 & 62.54
& 74.12 & 84.79 & 78.87 & 79.26 \\

MAPs~\cite{lee2023multimodal} & CVPR'23
& 49.17 & 51.82 & 50.09 & 50.36
& 61.12 & 63.24 & 65.04 & 63.13
& 76.52 & 85.64 & 79.12 & 80.43 \\

DCP~\cite{hu2024deep} & NeurIPS'24
& \underline{49.99} & 52.77 & 50.70 & 51.15
& 62.82 & 64.12 & 66.08 & 64.34
& 78.87 & 87.32 & 81.87 & 82.69 \\

RAGPT~\cite{lang2025retrievalaugmented} & AAAI'25
& 49.02 & 51.52 & 49.96 & 50.17
& 67.38 & 64.63 & 66.70 & 66.24
& 79.55 & 86.47 & 81.72 & 82.58 \\

MemPrompt~\cite{zhao2025enhancing} & IJCAI'25
& 49.55 & 52.83 & 50.40 & 50.93
& 66.37 & 62.90 & 64.93 & 64.73
& \underline{79.59} & 87.11 & \underline{82.47} & 83.06 \\

SyP~\cite{zhang2025synergistic} & ICCV'25
& 49.68 & \underline{53.19} & \underline{52.77} & \underline{51.88}
& \underline{68.94} & \underline{66.98} & \underline{68.42} & \underline{68.11}
& 79.56 & \underline{88.67} & 82.45 & \underline{83.56} \\

\rowcolor{tablecolor}

\textbf{\M} & Ours
& \textbf{51.50} & \textbf{54.86} & \textbf{53.31} & \textbf{53.22}
& \textbf{71.12} & \textbf{67.96} & \textbf{69.80} & \textbf{69.63}
& \textbf{80.77} & \textbf{88.86} & \textbf{83.24} & \textbf{84.29} \\

\midrule

\rowcolor{gray!10}
LB \text{\textbf{(CLIP, Missing Rate 90\%)}} & N/A
& 45.66 & 49.28 & 46.02 & 46.99
& 68.38 & 65.71 & 64.78 & 66.29
& 67.22 & 82.12 & 72.13 & 73.82 \\

MAPs~\cite{lee2023multimodal} & CVPR'23
& 48.44 & 50.15 & 47.08 & 48.56
& 57.21 & 61.52 & 63.34 & 60.69
& 73.16 & 82.14 & 76.58 & 77.29 \\

DCP~\cite{hu2024deep} & NeurIPS'24
& 48.40 & 51.79 & 48.23 & 49.47
& 62.08 & 63.87 & 66.78 & 64.24
& 75.26 & 85.78 & 79.87 & 80.30 \\

RAGPT~\cite{lang2025retrievalaugmented} & AAAI'25
& 48.40 & 51.79 & 48.23 & 49.47
& 68.00 & 65.01 & 65.06 & 66.02
& 76.62 & 86.24 & 79.61 & 80.82 \\

MemPrompt~\cite{zhao2025enhancing} & IJCAI'25
& 48.20 & 51.27 & 48.80 & 49.42
& 67.43 & 64.58 & 59.34 & 63.78
& 73.02 & 86.11 & 78.05 & 79.06 \\

SyP~\cite{zhang2025synergistic} & ICCV'25
& \underline{48.86} & 51.06 & \underline{48.82} & \underline{49.58}
& \underline{69.70} & 64.54 & \textbf{68.93} & \underline{67.72}
& \underline{76.33} & 86.41 & \underline{81.03} & \underline{81.26} \\

\rowcolor{tablecolor}

\textbf{\M} & Ours
& \textbf{50.54} & \textbf{53.89} & \textbf{49.91} & \textbf{51.45}
& \textbf{70.53} & \textbf{66.84} & \underline{68.35} & \textbf{68.57}
& \textbf{77.47} & \textbf{87.03} & \textbf{81.67} & \textbf{82.06} \\

\bottomrule

\end{tabular} 
}
\label{tab:performance}  
\vspace{-3mm}
\end{table*}


\section{Experiments}
\label{sec:experiments}

\subsection{Experimental Setup}
We provide a brief experimental setup, with details in Appendix~\ref{app:experimental_setup}, and additional experiment results in Appendix~\ref{app:additional-exp}.

\noindent \textbf{Benchmarks.}
Following~\cite{lee2023multimodal}, in the main paper, we adopt three benchmarks (cf.~\Cref{tab:dataset}):
\textbf{\ding{182} MM-IMDb}~\cite{arevalo2017gated}: a multi-label benchmark for movie genre classification with both image and text modalities.
We report F1-Macro (F1-M) as metric.
\textbf{\ding{183} HateMemes}~\cite{kiela2020hateful}: a hateful meme classification task that leverages both image and text modalities.
We use AUC as the metric.
\textbf{\ding{184} Food101}~\cite{wang2015recipe}: a 101-class food image–text classification task for recognition.
We adopt Accuracy (ACC) as the metric.

\begin{figure}[t]

\centering
\includegraphics[width=\linewidth]{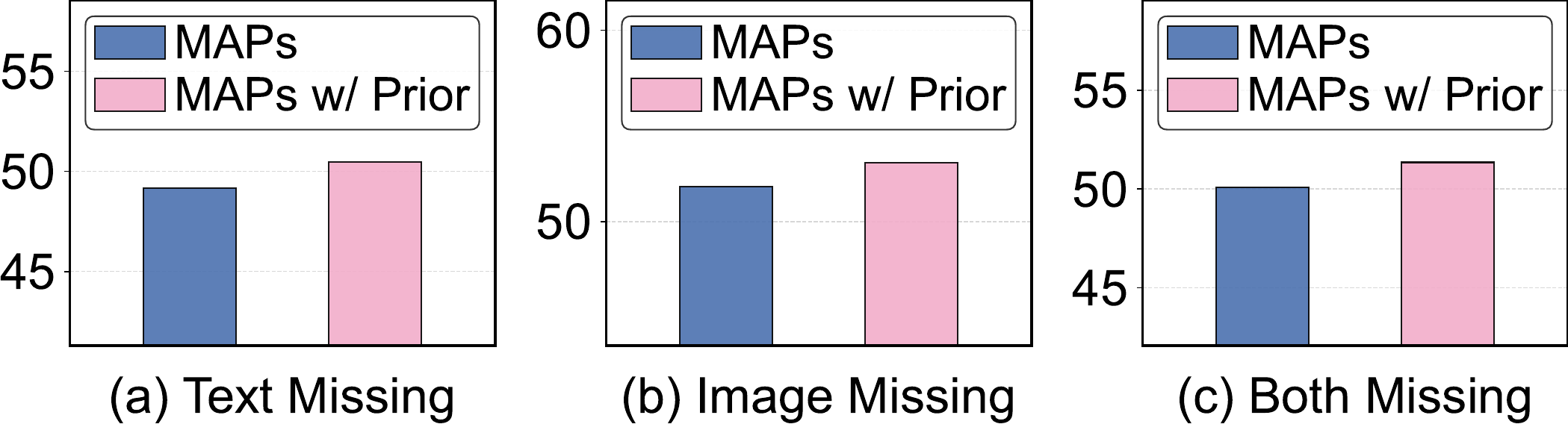}

\caption{Performance of baseline MAPs without and with the missing-modality information priors on the MM-IMDb dataset.
}
\label{fig:prompt-strategy}
\vspace{-4mm}
\end{figure}

\noindent \textbf{Baselines.}
We adopt 5 competitive MT-oriented modality-missing baselines: MAPs~\cite{lee2023multimodal}, DCP~\cite{hu2024deep}, RAGPT~\cite{lang2025retrievalaugmented}, MemPrompt~\cite{zhao2025enhancing}, and SyP~\cite{zhang2025synergistic}.

\noindent \textbf{Modality-Missing Protocol.}
Following~\cite{lee2023multimodal,lang2025retrievalaugmented}, we adopt a more general and challenging modality-missing setting, where modality missing occurs in both training and test phases.
We define the missing rate $\eta\%$ at both phases with three settings for dual-modal scenarios:
\textbf{\ding{182} Text Missing} or \textbf{\ding{183} Image Missing } with rate $\eta\%$:
$\eta\%$ of the samples are image-only or text-only, respectively, while the remaining $(100-\eta)\%$ samples are complete.
\textbf{\ding{184} Both Missing} with rate $\eta\%$:
$\frac{\eta}{2}$\% of the samples are text-only and $\frac{\eta}{2}$\% are image-only, with the remaining $(100-\eta)\%$ samples complete.
We set $\eta\% = 70\%$ and $\eta\% = 90\%$ for the main evaluation, and also evaluate other missing rates.

\noindent \textbf{Implementation Details.}
Following existing studies~\cite{lee2023multimodal,hu2024deep}, we adopt the dual-stream MT, \ding{182} \textbf{CLIP ViT-B/16}~\cite{radford2021learning}, as the main backbone.
Moreover, we also evaluate \M on single-stream MT, \ding{183} \textbf{ViLT}~\cite{kim2021vilt}, and a tri-modal MT, \ding{184} \textbf{MulT}~\cite{tsai2019multimodal} in Appendix~\ref{app:exp-tri-modal}.
Refined collection capacity $N'_t$ is set to 256 for efficiency.
The prompt length $M$ and prompt tuning depth $N$ are discussed in~\Cref{subsec:prompt-design}.
All experiments use RTX 4090 GPUs.

\subsection{Pilot Experiment: Unimodal Bottleneck}
\label{subsec:pilot-exp}
To understand and alleviate the Implicit Modality-Reduction (IMR) bottleneck in existing modality missing prompt tuning methods, we conduct a simple pilot experiment in a dual-modal situation on the MM-IMDb dataset.
As illustrated in~\Cref{fig:prompt-strategy}, we simply replace the randomly initialized prompts in baseline MAPs~\cite{lee2023multimodal} with prompts initialized using clustered text (w/ T Prior) or image (w/ I Prior) token representations for text- or image-missing samples, respectively.
We observe performance improvements with these modified prompts, indicating that the original performance of MTs is bounded to the \textit{degraded, single modality} input structure, despite their strong pretrained multimodal modeling capacity.
And injecting the corresponding modality-contextual priors can mitigate this bottleneck.

\subsection{Main Performance}
\label{subsec:main-exp}
We compare \M with several MT-oriented modality-missing baselines, with results in~\Cref{tab:performance}. We observe that:

\textbf{(O1)} Existing prompting methods improve the modality-missing performance of MTs (LB).
Methods such as MAPs, DCP, MemPrompt introduce a larger number of prompts with diverse types (e.g., sample-specific, memory-driven, synergistic static-dynamic) to refine the missing prompt tuning, while RAGPT employs instance-wise retrieval for missing-modality imputation and prompting.
\begin{table}[t]
    \caption{Ablation study of \M~under 70\% text missing.}
    \vspace{-2mm}
    \setlength{\tabcolsep}{10pt}
    \centering
    \label{table:ablation}
    \resizebox{\linewidth}{!}{
    \begin{tabular}{lccc}
        \toprule
        & \textbf{MM-IMDb} & \textbf{HateMemes} & \textbf{Food101} \\
        \cmidrule(lr){2-2} \cmidrule(lr){3-3} \cmidrule(lr){4-4}
        \textbf{Variant} & F1-M & AUC & ACC \\

        \midrule

        w/o MCP & 48.93 & 68.63 & 78.78 \\
        w/o Instantiation & 49.17 & 69.42 & 79.13 \\
        w/o Consistency & 50.56 & 69.85 & 79.59 \\
        w/ Reconstruction & 48.55 & 70.13 & 76.81 \\
        \midrule
        \rowcolor{tablecolor}
        \textbf{\M} & \textbf{51.50} & \textbf{71.12} & \textbf{80.77} \\
        \bottomrule
    \end{tabular}
    }
    \vspace{-2mm}
\end{table}

\begin{figure}[t]

\centering
\includegraphics[width=\linewidth]{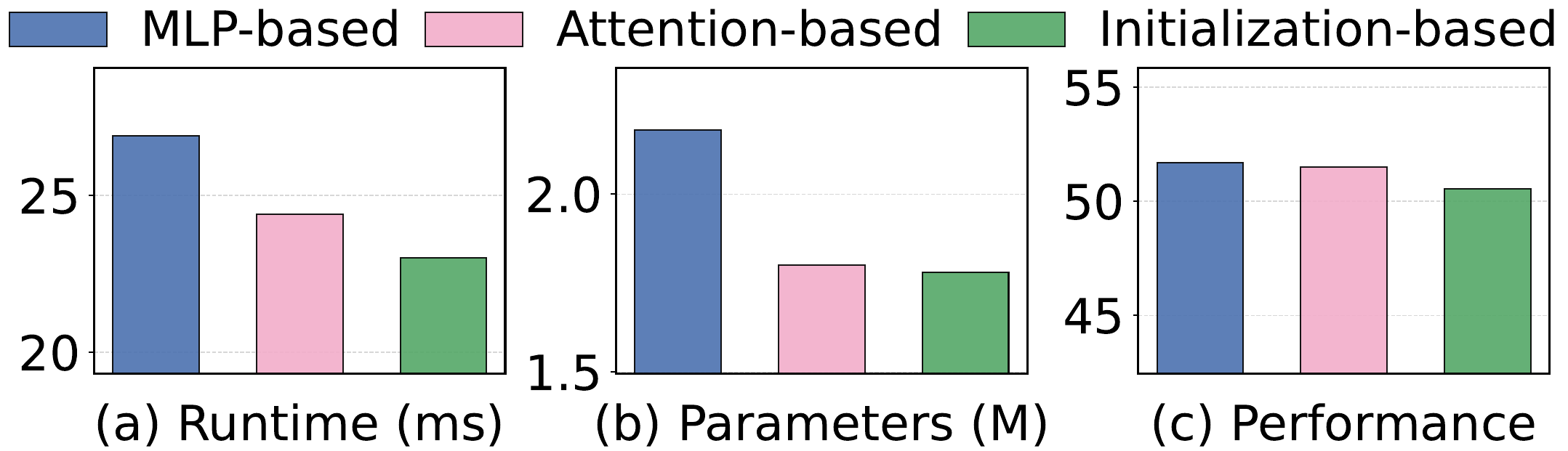}

\vspace{-1mm}
\caption{Comparison of three MCP construction methods in runtime costs, amount of learnable parameters, and performance.
}
\label{fig:prompt-construction}
\vspace{-2mm}
\end{figure}

\begin{figure}[h!]
    \centering
    \begin{subfigure}[b]{0.9\columnwidth}
    \centering
    \includegraphics[width=\columnwidth]{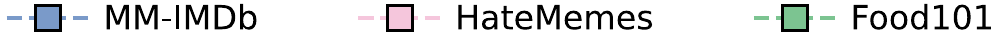}
    \end{subfigure}
    \centering
    \begin{subfigure}[b]{0.47\columnwidth}
    \centering
    \includegraphics[width=\columnwidth]{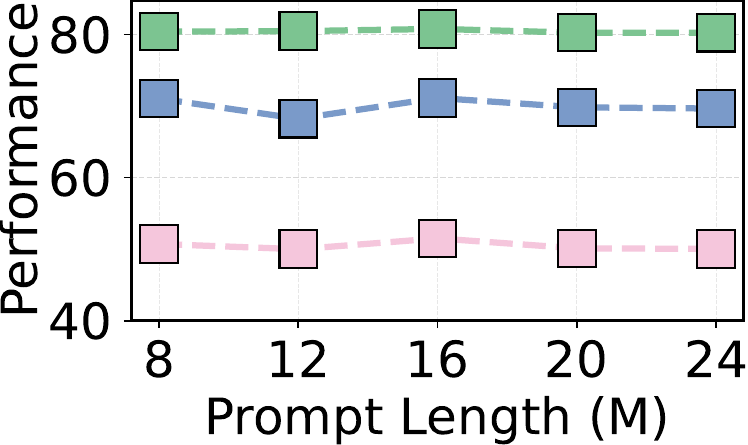}
    \caption{Prompt Length.}
    \end{subfigure}
    \hfill
    \begin{subfigure}[b]{0.47\columnwidth}
    \centering
    \includegraphics[width=\columnwidth]{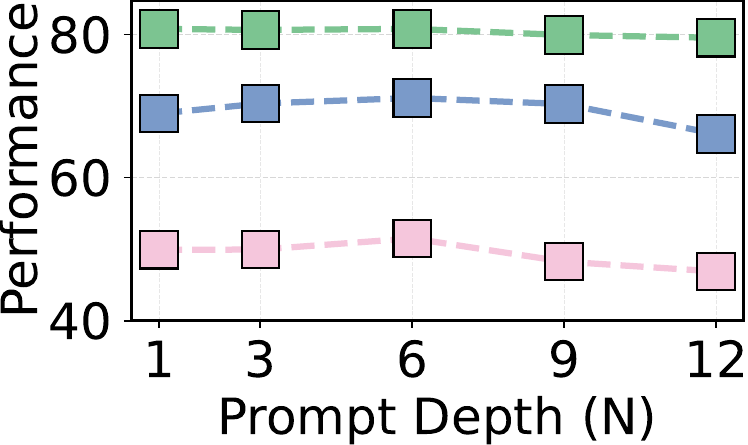}
    \caption{Prompt Depth.}
    \end{subfigure}
    \vspace{-1mm}
    \caption{Performance of \M with different prompt length and insertion positions under 70\% text-missing case.}
    \label{fig:prompt-hyper}
    \vspace{-3mm}
\end{figure}

\textbf{(O2)} However, compared to the MAPs, the subsequent methods incur noticeable additional computational overhead (e.g., increased learnable parameters, instance-wise retrieval, memory mechanism).
Moreover, most of them suffer from the Implicit Modality-Reduction (IMR) bottleneck, where the reasoning of the MT on incomplete samples is constrained by the degraded multimodal input structures.
As a remedy, \M effectively alleviates this bottleneck via an efficient solution, which explicitly replenishes sample-wise missing-modality information during lightweight modal-contextualized prompt tuning, achieving clear performance improvements with minimal learnable parameters.

\subsection{Ablation Study}
\label{subsec:ablation}
We analyze the role of core components within \M and report the results in~\Cref{table:ablation}.
Specifically, we design four variants:
\textbf{\ding{172} w/o MCP}: the MCPs are replaced with vanilla randomly initialized prompts, like several baselines;
\textbf{\ding{173} w/o Instantiation}: the MCPs are directly inserted into the MTs without instance-aware instantiation;
\textbf{\ding{174} w/o Consistency}: the remaining-modality consistency regularization is removed;
\textbf{\ding{175} w/ Reconstruction}: the MCPs are replaced by a modality-imputation network trained with a standard $L_2$ reconstruction loss, using a comparable number of learnable parameters to MCPs.
We observe that \textbf{variant \ding{172}} incurs a clear performance drop, as the MTs are pushed back into the unimodal bottleneck.
Moreover, \textbf{variants \ding{174} and \ding{173}} exhibit progressively degraded performance, underscoring the importance of selectively activating the most relevant information from the global modality-level repository for each sample.
Finally, \textbf{variant \ding{175}} yields suboptimal performance, as a lightweight reconstruction network struggles to capture complex cross-modal mappings.
Furthermore, the limited amount of modality-complete samples for reconstruction learning (i.e., 30\%) further undermines its efficacy.

\begin{table}[t]  
\caption{Performance (\%) of prompt tuning baselines and \M on MM-IMDb under a 70\% missing rate across diverse missing scenarios. 
The best results are in \textbf{bold} and the second are \underline{underlined}.
LB denotes the (lower-bound) performance of MT.} 
\label{tab:vilt}
\vspace{-2mm}
\centering  
\setlength{\tabcolsep}{6pt}
\renewcommand{\arraystretch}{1.1}
\resizebox{\linewidth}{!}{%
\begin{tabular}{lcccc}  
\toprule  
 &   \multicolumn{4}{c}{\textbf{MM-IMDb}} \\  
\cmidrule(lr){2-5}  
 &   \textbf{Text} & \textbf{Image} & \textbf{Both} & \textbf{Avg.} \\  
 \cmidrule(lr){2-2} \cmidrule(lr){3-3} \cmidrule(lr){4-4} \cmidrule(lr){5-5}
\textbf{Methods} 
& F1-M & F1-M & F1-M & F1-M \\    
\midrule

\rowcolor{gray!10}
LB \text{\textbf{(ViLT, Missing Rate 70\%)}}
& 28.83 & 19.87 & 24.65 & 24.45 \\

MAPs~\cite{lee2023multimodal}
& 35.29 & 36.92 & 35.28 & 35.83 \\

DCP~\cite{hu2024deep}
& 34.15 & 38.18 & 35.86 & 36.06 \\

RAGPT~\cite{lang2025retrievalaugmented}
& \underline{36.19} & 39.90 & 36.74 & 37.61 \\

MemPrompt~\cite{zhao2025enhancing}
& 35.40 & \underline{40.58} & \underline{38.23} & \underline{38.07} \\

SyP~\cite{zhang2025synergistic}
& 34.55 & 39.66 & 34.81 & 36.34 \\

\rowcolor{tablecolor}
\textbf{\M}
& \textbf{37.46} & \textbf{42.23} & \textbf{39.89} & \textbf{39.86} \\

\bottomrule
\end{tabular} 
}
\vspace{-5mm}
\end{table}

\subsection{Performance on Single-Stream MT Backbone}
Since \M is model-agnostic and can be applied to various MT backbones, we further evaluate it on the single-stream MT, ViLT~\cite{kim2021vilt}, with results in~\Cref{tab:vilt}.
We observe a conclusion similar to that of the main evaluation: \M effectively alleviates the IMR bottleneck of the MT backbone and achieves the best performance.

\subsection{In-Depth Analysis of Prompt Design}
\label{subsec:prompt-design}
\noindent \textbf{Alternative Construction Methods Analysis.}
We compare three MCP construction methods, and report the inference per batch runtime cost, amount of learnable parameters, and the performance on MM-IMDb dataset (F1-M) under 70\% text-missing case in~\Cref{fig:prompt-construction}.
We observe that the MLP-based construction achieves slightly higher performance than the Attention-based one with additional computational overhead, whereas the Initialization-based method yields the lowest runtime cost but also the worst performance.
Consequently, we adopt the Attention-based construction as the default choice, while the other two serve as alternatives for resource-constrained or resource-abundant settings.

\noindent \textbf{Prompt Length and Depth Analysis.}
We evaluate the effectiveness of different prompt length $M$ and prompt tuning depth $N$ (i.e., the number of layers with newly instantiated MCPs).
As illustrated in~\Cref{fig:prompt-hyper}, \M initially benefits from longer prompts and deeper tuning depth, with performance peaking at $M{=}\text{16}$ and $N{=}\text{6}$.
Consequently, we set $M{=}\text{16}$ and $N{=}\text{6}$ for an efficiency-performance trade-off.

\begin{figure}[t]
    \centering
    \begin{subfigure}[b]{0.97\columnwidth}
    \centering
    \includegraphics[width=\columnwidth]{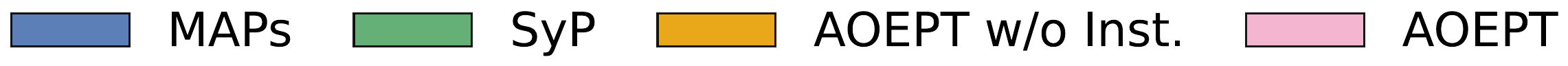}
    \end{subfigure}
    \centering
    \begin{subfigure}[b]{0.49\columnwidth}
    \centering
    \includegraphics[width=\columnwidth]{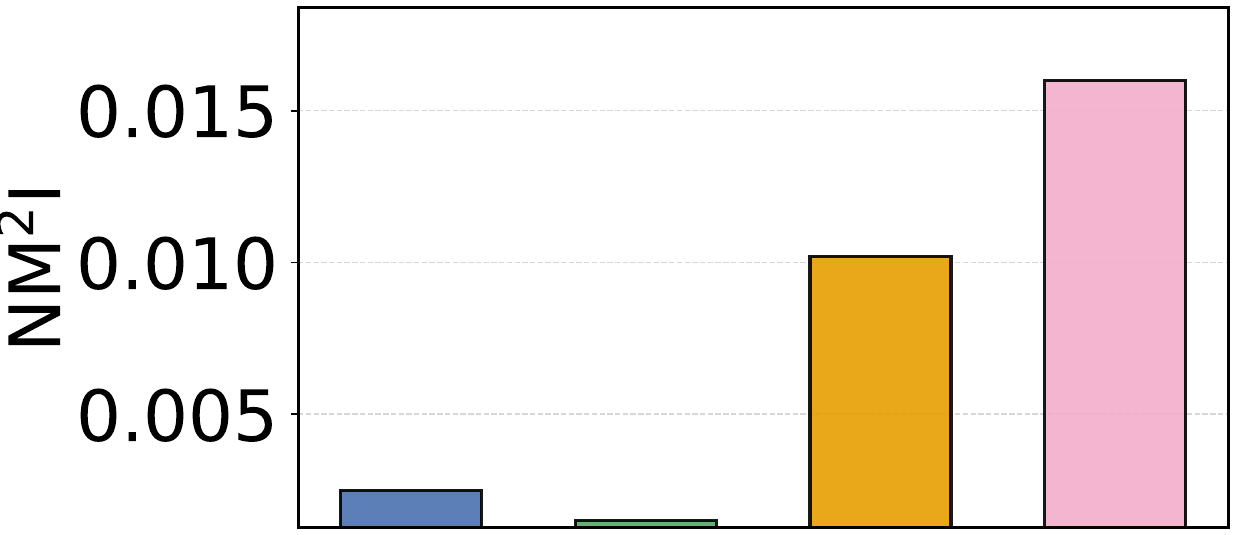}
    \caption{Text Missing.}
    \end{subfigure}
    \hfill
    \begin{subfigure}[b]{0.49\columnwidth}
    \centering
    \includegraphics[width=\columnwidth]{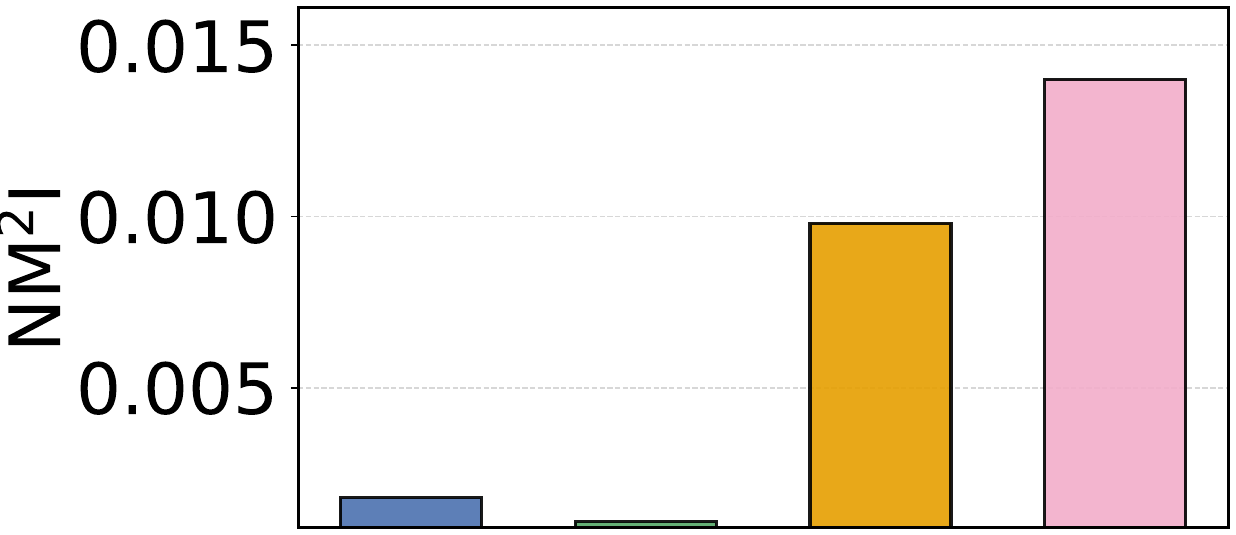}
    \caption{Image Missing.}
    \end{subfigure}
    \caption{NM\textsuperscript{2}I comparison of \M and baselines on the MM-IMDb dataset with 70\% text- or image-missing cases.}
    \label{fig:nm2i}
    \vspace{-2mm}
\end{figure}

\subsection{NM\textsuperscript{2}I Analysis for Implicit Modality-Reduction}
\label{subsec:NMI}

To further dissect whether \M alleviates the IMR bottleneck by replenishing sample-specific missing-modality information for MTs via prompt tuning, we draw inspiration from Normalized Mutual Information (NMI)~\cite{lancichinetti2009detecting,wang2024revisiting} and propose the metric \textbf{N}ormalized \textbf{M}issing-modality \textbf{M}utual \textbf{I}nformation (\textbf{NM\textsuperscript{2}I}).
NM\textsuperscript{2}I quantifies how much information the prompt tokens share with the ``ground-truth'' latent representations of the missing modality at each MT layer, where the latter are obtained by forwarding that modality via the frozen MT.

Specifically, for each MT encoder layer $l$, we treat the prompt tokens tied to a certain modality-missing case as one random variable $\mathbf{P}_l$, and the latent representations of that modality, obtained from the same layer under the assumption that the modality is fully observed (modality complete), as another random variable $\mathbf{M}_l$.
$\mathbf{M}_l$ is obtained by performing a frozen forward pass of the corresponding modality data from MT's layer $l$.
We then model the relationship between $\mathbf{P}_l$ and $\mathbf{M}_l$ by approximating it with an empirical joint distribution $\tilde{e}_{l}(\mathbf{P}_l,\mathbf{M}_l)$, deriving by dot-product between their pair-wise token representations:
\begin{equation}
\tilde{e}_{l}(\mathbf{p}_l^k,\mathbf{m}_l^j) \triangleq
\frac{\phi(\langle\mathbf{p}_l^k, \mathbf{m}_l^j\rangle)}
{\sum_{j,k}\phi(\langle\mathbf{p}_l^k, \mathbf{m}_l^j\rangle)},
\end{equation}
where $\tilde{e}_{l}(\mathbf{p}_l^k,\mathbf{m}_l^j)$ represent the empirical joint probability of the prompt token $\mathbf{p}_l^k$ and the corresponding modality representation $\mathbf{m}_l^j$, respectively, reflecting their dependency at $l$-layer, 
$\phi(\cdot)$ denotes bounded, non-negative function (e.g., sigmoid),
$\langle\cdot\rangle$ is the dot-product operation.
Consequently, the marginal distributions of $\tilde{e}_l(\mathbf{P}_l)$ and $\tilde{e}_l(\mathbf{m}_l)$ are obtained through the marginalization operation:
\begin{equation}
\tilde{e}_l(\mathbf{p}_l^k)
= \sum_j \tilde{e}_l(\mathbf{p}_l^k,\mathbf{m}_l^j),\;\;
\tilde{e}_l(\mathbf{m}_l^j)
= \sum_k \tilde{e}_l(\mathbf{p}_l^k,\mathbf{m}_l^j),
\end{equation}
We then borrow the definition of the NMI~\cite{lancichinetti2009detecting} and calculate the NM\textsuperscript{2}I:
\begin{equation}
\operatorname{NM^{2}I}_{(l)}
=
\frac{ \mathrm{MI}(\mathbf{P}_l;\mathbf{M}_l)}
{\tfrac{1}{2}\!\left(\mathrm{H}(\mathbf{P}_l) + \mathrm{H}(\mathbf{M}_l)\right)},
\end{equation}
where the mutual information $\mathrm{MI}(\mathbf{P}_l;\mathbf{M}_l)$ and the entropies $\mathrm{H}(\mathbf{P}_l)$ and $\mathrm{H}(\mathbf{M}_l)$ are computed from the empirical joint distribution and the corresponding marginal distributions, respectively.
Concretely, $\mathrm{H}(\mathbf{P}_l)$ can be computed as:
\begin{equation}
\mathrm{H}(\mathbf{P}_l)
= - \sum_k \tilde{e}_l(\mathbf{p}_l^k)\,
\log \tilde{e}_l(\mathbf{p}_l^k),
\end{equation}
and the mutual information term $\mathrm{MI}(\mathbf{P}_l;\mathbf{M}_l)$ is given by:
\begin{equation}
\mathrm{MI}(\mathbf{P}_l;\mathbf{M}_l)
=
\sum_{k,j}
\tilde{e}_l(\mathbf{p}_l^k,\mathbf{m}_l^j)
\log
\frac{
\tilde{e}_l(\mathbf{p}_l^k,\mathbf{m}_l^j)
}{
\tilde{e}_l(\mathbf{p}_l^k)\,
\tilde{e}_l(\mathbf{m}_l^j)
}.
\end{equation}
Since NM\textsuperscript{2}I empirically quantifies the normalized mutual information between the prompt tokens and the latent representations of the missing modality, a higher value of NM\textsuperscript{2}I indicates that the prompts carry richer and more sample-specific information about the missing modality, thereby more effectively alleviating the IMR bottleneck.

As illustrated in~\Cref{fig:nm2i}, we report the NM\textsuperscript{2}I values of \M and baselines averaged across layers and test samples on the MM-IMDb dataset.
We observe that baseline methods yield nearly zero NM\textsuperscript{2}I, which provides an alternative empirical perspective on the IMR bottleneck.
In contrast, \M effectively alleviates such bottleneck with clear NM\textsuperscript{2}I.
Notably, without instance-aware projection, the prompts in variant \M w/o Inst. (Instantiation) lack discriminability across samples, and provide limited informative missing-modality information at instance level.

\begin{figure}[t]

\centering
\includegraphics[width=\linewidth]{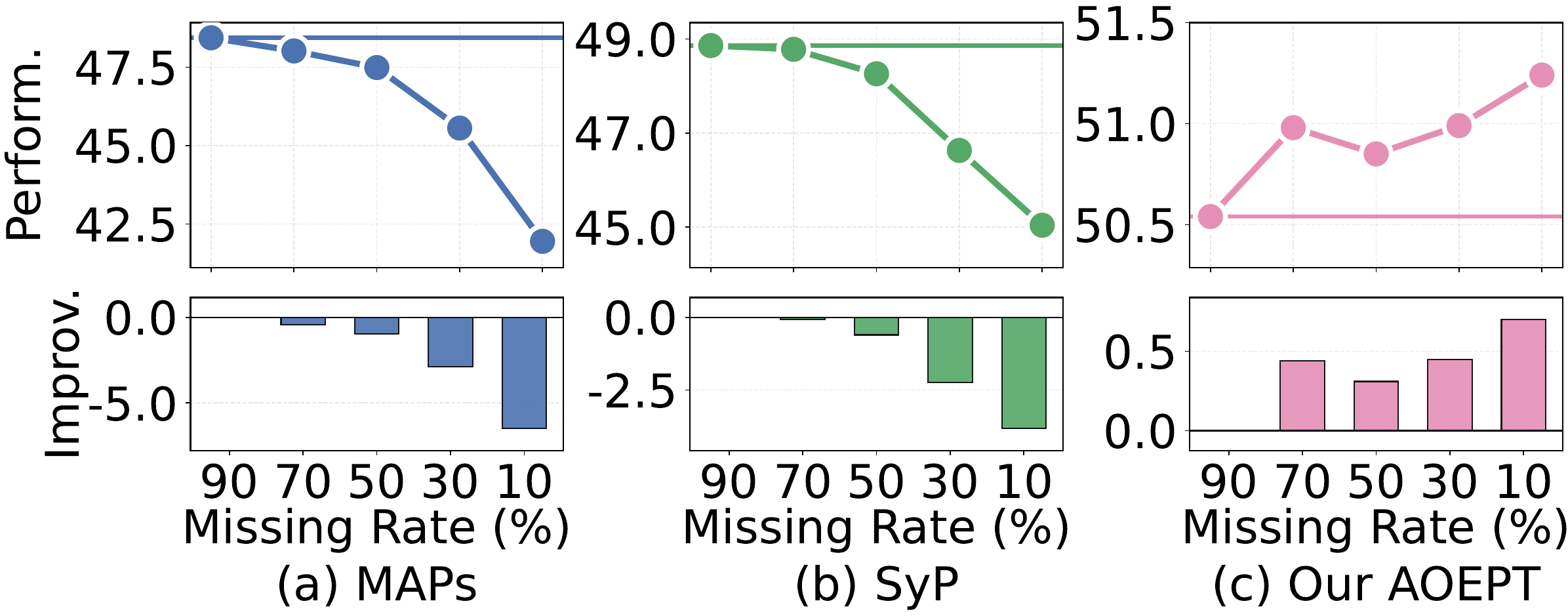}

\caption{Performance of \M and baseline methods under continually decreasing training modality-missing rate.
}
\label{fig:scaling}
\vspace{-2mm}
\end{figure}

\subsection{Modality Information Scaling Bottleneck Analysis}
\label{subsec:scaling-bottleneck}

In the main evaluation, we assume the same missing rate during training and testing.
However, in real-world scenarios, modality-missing issues are more likely to occur at test time, while the training phase can often access more modality-complete data.
Motivated by this practical consideration, we decrease the training text-missing rate from 90\% to 10\%, while fixing the test-time text-missing rate at 90\%, on MM-IMDb dataset.
Interestingly, in~\Cref{fig:scaling}(a)-(b), we observe that baseline prompting methods struggle to benefit from improved training conditions: their performance is hard to increase, but degrades as the training missing rate decreases, since training with lower missing rates makes them struggle to generalize to severely missing scenarios.

Alternatively, maintaining the high training missing rate (i.e., 90\%) causes the baseline performance to plateau (cf. horizontal lines in~\Cref{fig:scaling}(a)-(b)), a phenomenon we term the \textbf{Modality Information Scaling} bottleneck, which is a side effect of the Implicit Modality-Reduction problem.
In contrast, in~\Cref{fig:scaling}(c), \M benefits from improved training conditions, i.e., more information from the modality that is missing at test time.
Notably, \M does not reduce the missing rate (i.e., 90\%) for model training.
Nevertheless, its MCPs can leverage richer modality information (10\% -- 90\%) to form more comprehensive global modal-contextual repositories, thereby leading to improved performance.

\begin{figure}[t]

\centering
\includegraphics[width=\linewidth]{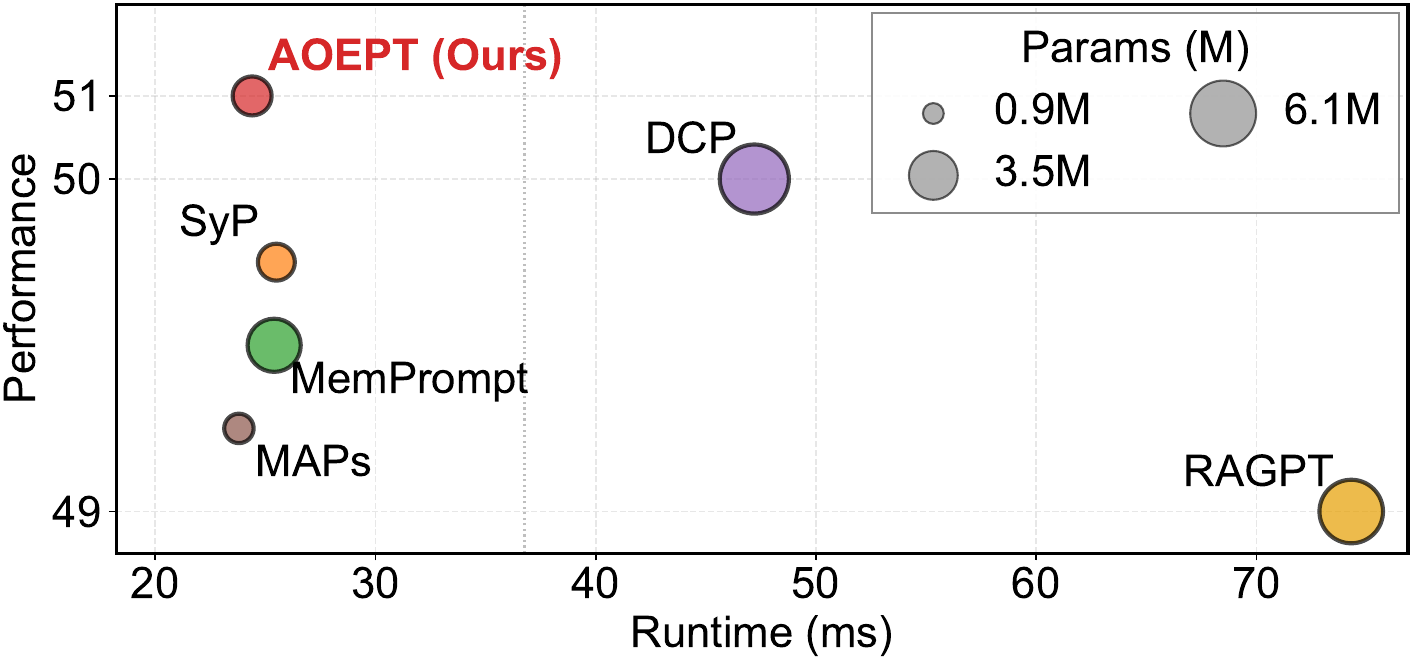}

\caption{Efficiency comparison between \M and baselines in terms of runtime costs and the number of learnable parameters.
}
\label{fig:efficiency}
\vspace{-4mm}
\end{figure}

\subsection{Efficiency Analysis}
\label{subsec:efficiency}
We compare the efficiency of \M and baselines in~\Cref{fig:efficiency}, with per-batch inference time and number of additionally introduced parameters, and performance on MM-IMDb under 70\% text missing.
We observe that \M incurs comparable and even lower computational costs than baselines.
The modest overhead mainly stems from the lightweight design of MCPs, which avoids costly components such as memory mechanisms or sample-wise retrieval (like MemPrompt, RAGPT), thereby achieving a favorable trade-off between efficacy and efficiency.

\section{Conclusion}
\label{sec:conclusion}
In this work, we proposed \M, a conceptually novel framework that overcomes the Implicit Modality-Reduction (IMR) bottleneck in existing modality-missing prompt tuning methods.
\M alleviates such bottleneck through a lightweight yet principled modal-contextualized prompting strategy, effectively augmenting MTs with instance-aware missing-modality information.
To quantify the IMR bottleneck, we further introduce Normalized Missing-modality Mutual Information (NM$^2$I) as a diagnostic metric, and leverage it to empirically validate the existence of this bottleneck in existing methods.
Extensive experiments demonstrate the effectiveness of \M, showing that it not only outperforms strong baselines but also achieves substantially higher NM$^2$I, with only modest computational overhead.





\section*{Limitations}
This study has following limitations and future directions.

First, NM$^2$I serves as a diagnostic metric for the IMR bottleneck, but it is not necessarily monotonic with model performance. 
Specifically, in multimodal learning, strong modeling of the remaining modality may still achieve promising results in certain scenarios, even when the model remains confined to a modality-reduced reasoning scope. 
Nevertheless, for scenarios where the modality redundancy is limited, the impact of the IMR bottleneck becomes more pronounced, as the remaining modalities may not provide sufficient information for prediction.
And for MTs pretrained with multimodal modeling capacity, restoring a sufficiently broad reasoning scope beyond the modality-reduced subspace remains crucial for robust modality-missing learning.
Second, in \M, we make a modest assumption commonly adopted in modality-missing learning: the semantic distributions of the training and test data do not differ significantly.
Under this assumption, our MCPs can effectively distill modality-wise information from the training set to alleviate the IMR bottleneck for inference-time samples.
Finally, unlike prior methods with randomly initialized, label-supervised prompts, \M internalizes information distilled from the modality-reduced space through the efficient prompt tuning. 
Future work could explore additional constraints on the prompts to more effectively extract predictive information from the restored modality space.

\section*{Acknowledgments}
This work was supported by the National Natural Science Foundation of China (Grant No.62572097 and No. U23A20315).
We would also like to thank a reviewer for remaining steadfast in assessment and providing insightful feedback that recognized the value of our work.

\section*{Impact Statement}
This paper identifies an inherent bottleneck, termed Implicit Modality-Reduction, in existing modality-missing prompt tuning methods for Multimodal Transformers (MTs). 
By revealing that current prompting mechanisms may unintentionally confine MTs to the modality-reduced subspace, this work provides a new perspective for understanding the limitations of existing approaches under incomplete multimodal inputs. 
To alleviate this IMR bottleneck, our method, \M, restores the reasoning scope of MTs beyond the observed modalities by explicitly providing access to missing-modality contextual information. 
In this sense, it reframes modality-missing learning for MTs from passive adaptation to degraded input structures into an active information-access perspective. 
Additionally, this is achieved without introducing substantial computational overhead.
\bibliography{main}

\begin{thebibliography}{47}
\providecommand{\natexlab}[1]{#1}
\providecommand{\url}[1]{\texttt{#1}}
\expandafter\ifx\csname urlstyle\endcsname\relax
  \providecommand{\doi}[1]{doi: #1}\else
  \providecommand{\doi}{doi: \begingroup \urlstyle{rm}\Url}\fi

\bibitem[Arevalo et~al.(2017)Arevalo, Solorio, Montes-y G{\'o}mez, and Gonz{\'a}lez]{arevalo2017gated}
Arevalo, J., Solorio, T., Montes-y G{\'o}mez, M., and Gonz{\'a}lez, F.~A.
\newblock Gated multimodal units for information fusion.
\newblock \emph{arXiv preprint arXiv:1702.01992}, 2017.

\bibitem[Bai et~al.(2025)Bai, Cai, Chen, Chen, Chen, and Cheng]{bai2025qwenvl}
Bai, S., Cai, Y., Chen, R., Chen, K., Chen, X.-H., and Cheng, Zesen, e.~a.
\newblock Qwen3-{VL} {Technical} {Report}.
\newblock \emph{arXiv.org}, abs/2511.21631, 2025.

\bibitem[Busso et~al.(2008)Busso, Bulut, Lee, Kazemzadeh, Mower, Kim, Chang, Lee, and Narayanan]{busso2008iemocap}
Busso, C., Bulut, M., Lee, C.-C., Kazemzadeh, A., Mower, E., Kim, S., Chang, J.~N., Lee, S., and Narayanan, S.~S.
\newblock Iemocap: interactive emotional dyadic motion capture database.
\newblock \emph{Language Resources and Evaluation}, 42\penalty0 (4):\penalty0 335--359, 2008.

\bibitem[Cai et~al.(2018)Cai, Wang, Gao, Shen, and Ji]{cai2018deep}
Cai, L., Wang, Z., Gao, H., Shen, D., and Ji, S.
\newblock Deep adversarial learning for multi-modality missing data completion.
\newblock In \emph{Proceedings of the ACM SIGKDD Conference on Knowledge Discovery and Data Mining (KDD)}, pp.\  1158--1166, 2018.

\bibitem[Guo et~al.(2025)Guo, Chen, Hou, Wu, and Yuan]{guo2025swam}
Guo, M., Chen, C., Hou, C., Wu, Y., and Yuan, X.
\newblock Swam: Adaptive sliding window and memory-augmented attention model for rumor detection.
\newblock In \emph{Conference on Empirical Methods in Natural Language Processing (EMNLP)}, pp.\  14430--14441, 2025.

\bibitem[Hong et~al.(2025)Hong, Lang, Zhong, and Zhou]{hong2025borrowing}
Hong, R., Lang, J., Zhong, T., and Zhou, F.
\newblock Borrowing eyes for the blind spot: Overcoming data scarcity in malicious video detection via cross-domain retrieval augmentation.
\newblock In \emph{IEEE International Conference on Computer Vision ({ICCV})}, 2025.

\bibitem[Hu et~al.(2024)Hu, Shi, Feng, Shang, and Wan]{hu2024deep}
Hu, L., Shi, T., Feng, W., Shang, F., and Wan, L.
\newblock Deep {Correlated} {Prompting} for {Visual} {Recognition} with {Missing} {Modalities}.
\newblock In \emph{Conference on {Neural} {Information} {Processing} {Systems} ({NeurIPS})}, 2024.

\bibitem[Jang et~al.(2024)Jang, Wang, and Kim]{jang2024towards}
Jang, J., Wang, Y., and Kim, C.
\newblock Towards robust multimodal prompting with missing modalities.
\newblock In \emph{IEEE International Conference on Acoustics, Speech and Signal Processing (ICASSP)}, pp.\  8070--8074. IEEE, 2024.

\bibitem[Jia et~al.(2022)Jia, Tang, Chen, Cardie, Belongie, Hariharan, and Lim]{jia2022visual}
Jia, M., Tang, L., Chen, B.-C., Cardie, C., Belongie, S., Hariharan, B., and Lim, S.-N.
\newblock Visual prompt tuning.
\newblock In \emph{European conference on computer vision (ECCV)}, pp.\  709--727. Springer, 2022.

\bibitem[Khattak et~al.(2023)Khattak, Rasheed, Maaz, Khan, and Khan]{khattak2023maple}
Khattak, M.~U., Rasheed, H.~A., Maaz, M., Khan, S.~H., and Khan, F.~S.
\newblock Maple: Multi-modal {Prompt} {Learning}.
\newblock In \emph{IEEE/CVF Conference on Computer Vision and Pattern Recognition (CVPR)}, pp.\  19113--19122, 2023.

\bibitem[Kiela et~al.(2020)Kiela, Firooz, Mohan, Goswami, Singh, Ringshia, and Testuggine]{kiela2020hateful}
Kiela, D., Firooz, H., Mohan, A., Goswami, V., Singh, A., Ringshia, P., and Testuggine, D.
\newblock The hateful memes challenge: Detecting hate speech in multimodal memes.
\newblock \emph{Advances in Neural Information Processing Systems (NeurIPS)}, 33:\penalty0 2611--2624, 2020.

\bibitem[Kim \& Kim(2024)Kim and Kim]{kim2024missing}
Kim, D. and Kim, T.
\newblock Missing modality prediction for unpaired multimodal learning via joint embedding of unimodal models.
\newblock In \emph{European Conference on Computer Vision (ECCV)}, pp.\  171--187. Springer, 2024.

\bibitem[Kim et~al.(2021)Kim, Son, and Kim]{kim2021vilt}
Kim, W., Son, B., and Kim, I.
\newblock Vilt: Vision-and-language transformer without convolution or region supervision.
\newblock In \emph{International Conference on Machine Learning (ICML)}, pp.\  5583--5594. PMLR, 2021.

\bibitem[Lancichinetti et~al.(2009)Lancichinetti, Fortunato, and Kert{\'e}sz]{lancichinetti2009detecting}
Lancichinetti, A., Fortunato, S., and Kert{\'e}sz, J.
\newblock Detecting the overlapping and hierarchical community structure in complex networks.
\newblock \emph{New journal of physics}, 11\penalty0 (3):\penalty0 033015, 2009.

\bibitem[Lang et~al.(2025{\natexlab{a}})Lang, Cheng, Zhong, and Zhou]{lang2025retrievalaugmented}
Lang, J., Cheng, Z., Zhong, T., and Zhou, F.
\newblock Retrieval-{Augmented} {Dynamic} {Prompt} {Tuning} for {Incomplete} {Multimodal} {Learning}.
\newblock In \emph{AAAI {Conference} on {Artificial} {Intelligence}}, volume abs/2501.01120, 2025{\natexlab{a}}.

\bibitem[Lang et~al.(2025{\natexlab{b}})Lang, Hong, Cheng, Zhong, Wang, and Zhou]{lang2025redeeming}
Lang, J., Hong, R., Cheng, Z., Zhong, T., Wang, Y., and Zhou, F.
\newblock Redeeming modality information loss: Retrieval-guided conditional generation for severely modality missing learning.
\newblock In \emph{Proceedings of the 31st ACM SIGKDD Conference on Knowledge Discovery and Data Mining V. 2}, pp.\  1241--1252, 2025{\natexlab{b}}.
\newblock \doi{10.1145/3711896.3737101}.

\bibitem[Lang et~al.(2026)Lang, Hong, Zhong, Wang, and Zhou]{lang2026radar}
Lang, J., Hong, R., Zhong, T., Wang, Y., and Zhou, F.
\newblock Nip rumors in the bud: Retrieval-guided topic-level adaptation for test-time fake news video detection.
\newblock In \emph{Proceedings of the 32st ACM SIGKDD Conference on Knowledge Discovery and Data Mining V. 1}, 2026.

\bibitem[Lee et~al.(2023)Lee, Tsai, Chiu, and Lee]{lee2023multimodal}
Lee, Y.-L., Tsai, Y.-H., Chiu, W.-C., and Lee, C.-Y.
\newblock Multimodal {Prompting} with {Missing} {Modalities} for {Visual} {Recognition}.
\newblock In \emph{IEEE/CVF Conference on Computer Vision and Pattern Recognition (CVPR)}, pp.\  14943--14952, 2023.

\bibitem[Li et~al.(2026)Li, Lang, Tang, Shu, Zhong, Gao, Wang, Chen, and Zhou]{li2026shedding}
Li, J., Lang, J., Tang, X., Shu, W., Zhong, T., Gao, Q., Wang, Y., Chen, L., and Zhou, F.
\newblock Shedding the facades, connecting the domains: Detecting shifting multimodal hate video with test-time adaptation.
\newblock In \emph{AAAI Conference on Artificial Intelligence (AAAI)}, 2026.

\bibitem[Li et~al.(2025)Li, Chen, and Han]{li2025simmlm}
Li, S., Chen, C., and Han, J.
\newblock Simmlm: A simple framework for multi-modal learning with missing modality.
\newblock In \emph{Proceedings of the IEEE/CVF International Conference on Computer Vision (ICCV)}, pp.\  24068--24077, 2025.

\bibitem[Liu et~al.(2025)Liu, Wang, Yang, Gao, and Liu]{liu2025surrogate}
Liu, L., Wang, N., Yang, X., Gao, X., and Liu, T.
\newblock Surrogate {Prompt} {Learning}: Towards {Efficient} and {Diverse} {Prompt} {Learning} for {Vision}-{Language} {Models}.
\newblock In \emph{International {Conference} on {Machine} {Learning} (ICML)}, 2025.

\bibitem[Loshchilov \& Hutter(2019)Loshchilov and Hutter]{loshchilov2019decoupled}
Loshchilov, I. and Hutter, F.
\newblock Decoupled {Weight} {Decay} {Regularization}.
\newblock In \emph{International {Conference} on {Learning} {Representations} ({ICLR})}, 2019.

\bibitem[Ma et~al.(2021)Ma, Ren, Zhao, Tulyakov, Wu, and Peng]{ma2021smil}
Ma, M., Ren, J., Zhao, L., Tulyakov, S., Wu, C., and Peng, X.
\newblock Smil: Multimodal learning with severely missing modality.
\newblock In \emph{Proceedings of the AAAI Conference on Artificial Intelligence (AAAI)}, volume~35, pp.\  2302--2310, 2021.

\bibitem[Ma et~al.(2022)Ma, Ren, Zhao, Testuggine, and Peng]{ma2022multimodal}
Ma, M., Ren, J., Zhao, L., Testuggine, D., and Peng, X.
\newblock Are multimodal transformers robust to missing modality?
\newblock In \emph{IEEE/CVF Conference on Computer Vision and Pattern Recognition (CVPR)}, pp.\  18177--18186, 2022.

\bibitem[Marouf et~al.(2025)Marouf, Tartaglione, Lathuili{\`e}re, and Van De~Weijer]{marouf2025ask}
Marouf, I.~E., Tartaglione, E., Lathuili{\`e}re, S., and Van De~Weijer, J.
\newblock Ask and remember: A questions-only replay strategy for continual visual question answering.
\newblock In \emph{Proceedings of the IEEE/CVF International Conference on Computer Vision (ICCV)}, pp.\  18078--18089, 2025.

\bibitem[Radford et~al.(2021)Radford, Kim, Hallacy, Ramesh, Goh, Agarwal, Sastry, Askell, Mishkin, Clark, et~al.]{radford2021learning}
Radford, A., Kim, J.~W., Hallacy, C., Ramesh, A., Goh, G., Agarwal, S., Sastry, G., Askell, A., Mishkin, P., Clark, J., et~al.
\newblock Learning transferable visual models from natural language supervision.
\newblock In \emph{International Conference on Machine Learning (ICML)}, pp.\  8748--8763. PMLR, 2021.

\bibitem[Tsai et~al.(2019)Tsai, Bai, Liang, Kolter, Morency, and Salakhutdinov]{tsai2019multimodal}
Tsai, Y.-H.~H., Bai, S., Liang, P.~P., Kolter, J.~Z., Morency, L.-P., and Salakhutdinov, R.
\newblock Multimodal {Transformer} for {Unaligned} {Multimodal} {Language} {Sequences}.
\newblock In \emph{Proceedings of the {Annual} {Meeting} of the {Association} for {Computational} {Linguistics} (ACL)}. Association for Computational Linguistics, 2019.

\bibitem[Wang et~al.(2023{\natexlab{a}})Wang, Chen, Ma, Avery, Hull, and Carneiro]{wang2023multi}
Wang, H., Chen, Y., Ma, C., Avery, J., Hull, L., and Carneiro, G.
\newblock Multi-modal learning with missing modality via shared-specific feature modelling.
\newblock In \emph{IEEE/CVF Conference on Computer Vision and Pattern Recognition (CVPR)}, pp.\  15878--15887, 2023{\natexlab{a}}.

\bibitem[Wang et~al.(2025)Wang, Li, and Wei]{wang2025understanding}
Wang, S., Li, Y., and Wei, H.
\newblock Understanding and {Mitigating} {Miscalibration} in {Prompt} {Tuning} for {Vision}-{Language} {Models}.
\newblock In \emph{International {Conference} on {Machine} {Learning} (ICML)}, volume abs/2410.02681, 2025.

\bibitem[Wang et~al.(2015)Wang, Kumar, Thome, Cord, and Precioso]{wang2015recipe}
Wang, X., Kumar, D., Thome, N., Cord, M., and Precioso, F.
\newblock Recipe recognition with large multimodal food dataset.
\newblock In \emph{IEEE International Conference on Multimedia \& Expo Workshops (ICME)}, pp.\  1--6. IEEE, 2015.

\bibitem[Wang et~al.(2019)Wang, Shen, Liu, Liang, Zadeh, and Morency]{wang2019words}
Wang, Y., Shen, Y., Liu, Z., Liang, P.~P., Zadeh, A., and Morency, L.-P.
\newblock Words can shift: Dynamically adjusting word representations using nonverbal behaviors.
\newblock In \emph{Proceedings of the AAAI Conference on Artificial Intelligence (AAAI)}, volume~33, pp.\  7216--7223, 2019.

\bibitem[Wang et~al.(2023{\natexlab{b}})Wang, Cui, and Li]{wang2023distribution}
Wang, Y., Cui, Z., and Li, Y.
\newblock Distribution-consistent modal recovering for incomplete multimodal learning.
\newblock In \emph{Proceedings of the IEEE/CVF International Conference on Computer Vision (ICCV)}, pp.\  22025--22034, 2023{\natexlab{b}}.

\bibitem[Wang et~al.(2023{\natexlab{c}})Wang, Li, and Cui]{wang2023incomplete}
Wang, Y., Li, Y., and Cui, Z.
\newblock Incomplete multimodality-diffused emotion recognition.
\newblock \emph{Advances in Neural Information Processing Systems (NeurIPS)}, 36:\penalty0 17117--17128, 2023{\natexlab{c}}.

\bibitem[Wang et~al.(2024)Wang, Cheng, Fang, Zhang, Duan, and Wang]{wang2024revisiting}
Wang, Y., Cheng, L., Fang, C., Zhang, D., Duan, M., and Wang, M.
\newblock Revisiting the power of prompt for visual tuning.
\newblock In \emph{International Conference on Machine Learning (ICML)}, 2024.

\bibitem[Wu et~al.(2024)Wu, Wang, Chen, and Carneiro]{wu2024deep}
Wu, R., Wang, H., Chen, H.-T., and Carneiro, G.
\newblock Deep multimodal learning with missing modality: A survey.
\newblock \emph{arXiv preprint arXiv:2409.07825}, 2024.

\bibitem[Xu et~al.(2023)Xu, Zhu, and Clifton]{xu2023multimodal}
Xu, P., Zhu, X., and Clifton, D.~A.
\newblock Multimodal learning with transformers: A survey.
\newblock \emph{IEEE Transactions on Pattern Analysis and Machine Intelligence (TPAMI)}, 45\penalty0 (10):\penalty0 12113--12132, 2023.

\bibitem[Yao et~al.(2023)Yao, Zhang, and Xu]{yao2023visuallanguage}
Yao, H., Zhang, R., and Xu, C.
\newblock Visual-{Language} {Prompt} {Tuning} with {Knowledge}-{Guided} {Context} {Optimization}.
\newblock In \emph{IEEE/CVF Conference on Computer Vision and Pattern Recognition (CVPR)}, pp.\  6757--6767, 2023.

\bibitem[Yuan et~al.(2025)Yuan, Li, and Zhao]{yuan2025survey}
Yuan, Y., Li, Z., and Zhao, B.
\newblock A survey of multimodal learning: Methods, applications, and future.
\newblock \emph{ACM Computing Surveys}, 57\penalty0 (7):\penalty0 1--34, 2025.

\bibitem[Zadeh et~al.(2016)Zadeh, Zellers, Pincus, and Morency]{zadeh2016multimodal}
Zadeh, A., Zellers, R., Pincus, E., and Morency, L.-P.
\newblock Multimodal sentiment intensity analysis in videos: Facial gestures and verbal messages.
\newblock \emph{IEEE Intelligent Systems}, 31\penalty0 (6):\penalty0 82--88, 2016.

\bibitem[Zhang et~al.(2024)Zhang, Wu, Gao, Shen, and Song]{zhang2024dept}
Zhang, J., Wu, S., Gao, L., Shen, H.~T., and Song, J.
\newblock Dept: Decoupled {Prompt} {Tuning}.
\newblock In \emph{IEEE/CVF Conference on Computer Vision and Pattern Recognition (CVPR)}, 2024.

\bibitem[Zhang et~al.(2022)Zhang, Fei, Li, Yu, and Li]{zhang2022prompting}
Zhang, Y., Fei, H., Li, D., Yu, T., and Li, P.
\newblock Prompting through prototype: A prototype-based prompt learning on pretrained vision-language models.
\newblock \emph{arXiv preprint arXiv:2210.10841}, 2022.

\bibitem[Zhang et~al.(2025)Zhang, Dai, Lin, Diao, Jin, Guo, Zhang, and Hao]{zhang2025synergistic}
Zhang, Z., Dai, L., Lin, Q., Diao, Y., Jin, G., Guo, Y., Zhang, J., and Hao, X.
\newblock Synergistic prompting for robust visual recognition with missing modalities.
\newblock In \emph{Proceedings of the IEEE/CVF International Conference on Computer Vision (ICCV)}, pp.\  1881--1890, 2025.

\bibitem[Zhao et~al.(2021)Zhao, Li, and Jin]{zhao2021missing}
Zhao, J., Li, R., and Jin, Q.
\newblock Missing modality imagination network for emotion recognition with uncertain missing modalities.
\newblock In \emph{Proceedings of the {Annual} {Meeting} of the {Association} for {Computational} {Linguistics} (ACL)}, pp.\  2608--2618, 2021.

\bibitem[Zhao et~al.(2025)Zhao, Xi, Fu, and Zhao]{zhao2025enhancing}
Zhao, Y., Xi, W., Fu, X., and Zhao, J.
\newblock Enhancing {Multimodal} {Model} {Robustness} {Under} {Missing} {Modalities} via {Memory}-{Driven} {Prompt} {Learning}.
\newblock In \emph{Proceedings of the {Thirty}-{Fourth} {International} {Joint} {Conference} on {Artificial} {Intelligence}}, pp.\  2458--2466. International Joint Conferences on Artificial Intelligence (IJCAI), 2025.

\bibitem[Zhou et~al.(2022{\natexlab{a}})Zhou, Yang, Loy, and Liu]{zhou2022conditional}
Zhou, K., Yang, J., Loy, C.~C., and Liu, Z.
\newblock Conditional {Prompt} {Learning} for {Vision}-{Language} {Models}.
\newblock In \emph{IEEE/CVF Conference on Computer Vision and Pattern Recognition (CVPR)}, pp.\  16795--16804, 2022{\natexlab{a}}.

\bibitem[Zhou et~al.(2022{\natexlab{b}})Zhou, Yang, Loy, and Liu]{zhou2022learning}
Zhou, K., Yang, J., Loy, C.~C., and Liu, Z.
\newblock Learning to {Prompt} for {Vision}-{Language} {Models}.
\newblock \emph{International Journal of Computer Vision (IJCV)}, 130\penalty0 (9):\penalty0 2337--2348, 2022{\natexlab{b}}.

\bibitem[Zhu et~al.(2023)Zhu, Niu, Han, Wu, and Zhang]{zhu2023promptaligned}
Zhu, B., Niu, Y., Han, Y., Wu, Y., and Zhang, H.
\newblock Prompt-aligned {Gradient} for {Prompt} {Tuning}.
\newblock In \emph{Proceedings of the IEEE/CVF International Conference on Computer Vision (ICCV)}, pp.\  15613--15623, 2023.

\end{thebibliography}
\bibliographystyle{icml2026}

\newpage
\appendix
\onecolumn

\crefalias{section}{appendix}

\section{Implementation of \M on Dual-Stream Multimodal Transformer}
\label{app:imple_dual_MT}
In the main paper, we present the mathematical formulation of \M on the single-stream MT for clarity.
Nevertheless, extending \M to dual-stream MTs is straightforward, as it follows the same formulation and differs only in the underlying MT architecture.
Specifically, we formulate a dual-stream MT with text and image encoders, denoted as $F^{t}_\theta(\cdot)$ and $F^{v}_\theta(\cdot)$, respectively, potentially followed by a multimodal alignment module $M_{\pi}(\cdot)$.
Each encoder can be simplified as a stack of L transformer encoder layers: $F^{m}_\theta(\cdot) = f^{m,L}_{\theta} \circ f^{m,L-1}_{\theta} \circ \cdots \circ f^{m,1}_{\theta}(\cdot)$, where $m \in \{t,v\}$.
The prediction is given:
\begin{equation}
\label{eq:12}
y = C_{\phi}\!\left(M_{\pi}\!\left(F^{t}_\theta(t), F^{v}_\theta(v)\right)\right),
\end{equation}
where $C_{\phi}(\cdot)$ is the task-specific classification head, $t$ and $v$ are text and image modalities for each sample $x$.
Subsequently, taking the Text-Contextualized Prompts (TCPs) construction as an example, we feed the text modality data from $N_t$ text-available training samples (i.e., both modality-complete and text-only ones) into the $L$ frozen MT text encoder layers, and the resulting inferred layer-wise tokens form the text-specific information collections:
\begin{equation}
\label{eq:13}
   \mathbf{C}^{l}_t = \{\mathbf{t}^{l}_{1},\mathbf{t}^{l}_{2},\cdots,\mathbf{t}^{l}_{N_t}\}, \qquad \mathbf{t}^{l}_{i} = \operatorname{Pool}(F_\theta^{t,l}(t_i)),
\end{equation}
where $t_i$ is the text modality data for each text-available sample $x_i$, $\mathbf{C}^{l}_t$ is the text-specific information collection derived from the $l$ text encoder layer, $l\in[0,L-1]$, with each element $\mathbf{t}^{l}_{i} \in \mathbb{R}^{d}$ is the sequence-pooled text token representation of sample $x_i$, $F^{t,l}_\theta(\cdot) = f^{t,l}_{\theta} \circ \cdots \circ f^{t,1}_{\theta}(\cdot)$, $d$ denotes the feature dimension, $\operatorname{Pool}(\cdot)$ represents the average pooling operation.
When $l = 0$, each $\mathbf{t}^{0}_{i}$ is derived from the text embedding layer of MT.
The subsequent steps for TCPs construction and instance-aware instantiation follow the same manner as provided in the main paper (cf.~Eq.~(\ref{eq:3}) -- (\ref{eq:9})).

Subsequently, taking the image-only example $x_i$ as an example, for the first $N$ text encoder layers, we perform the missing-adaptive prompt tuning using the instance-aware TCPs, while dropping the prompts propagated from the prior layer:
\begin{equation}
\label{eq:15}
 \lunderline, \,  \mathbf{T}^l_{i}  = f^{t,l}_{\theta}([\mathbf{P}_{\text{TCP},i}^{l},\mathbf{T}^{l-1}_{i}]), \; l \in [1,N],
\end{equation}
where $\mathbf{T}^l_{i}$ is the text hidden representations of sample $x_i$ from $l$-th MT layer, $\lunderline$ indicates that the prompts from prior layers are discarded.
Notably, when $l=1$, $\mathbf{T}^{0}_{i}$ is simply padded with the embedding of an empty string for the text-missing sample.
In the remaining layers, the prompts $\mathbf{P}_{\text{TCP},i}^{l}$ are no longer newly initialized for each layer.
Instead, they are directly inherited from the previous layer and propagated to subsequent layers:
\begin{equation}
\label{eq:16}
    \mathbf{P}_{\text{TCP},i}^{{l+1}}, \, \mathbf{T}^l_{i}  = f^{t,l}_{\theta}([\mathbf{P}_{\text{TCP},i}^{l},\mathbf{T}^{l-1}_{i}]), \; l \in [N+1,L].
\end{equation}
Finally, the last-layer text representation of $x_i$, denoted as $\mathbf{T}^L_i$, together with the corresponding image representation $\mathbf{V}^L_i$, is fed into the multimodal fusion module $M_\pi$, and the fused representation is then passed to a task-specific classifier $C_\phi(\cdot)$ (e.g., an MLP) to produce the final prediction: $\hat{y}_i = C_\phi(M_\pi(\mathbf{T}^{L}_{i},\mathbf{V}^{L}_{i}))$.

\section{Derivation of Intra-Modal Latent Consistency Regularization}
\label{app:intra-modal-cr}
In this section, we provide a detailed derivation of the proposed intra-modal latent consistency regularization constraint $L_\text{CR}$.
Since the instance-aware TCPs are derived from the global ones, we design this constraint to further disentangle the most relevant information from the global text-modal repositories for each data instance.
Taking the instance-aware TCP $\bar{\mathbf{P}}_{\text{TCP},i}^{l}$ for sample $x_i$ as an example, a more straightforward way is to leverage the latent representations of the remaining modality (i.e., image modality) for the consistency regularization, which can be formulated as follows:
\begin{equation}
\label{eq:16}
    L_{\text{CR}} = -\log \frac{\exp(\text{sim}(\bar{\mathbf{P}}_{\text{TCP},i}^{l}, \mathbf{\bar{V}}_{i}^{l-1}) / \tau)}{\sum_{k=1}^{B} \exp( \text{sim}(\bar{\mathbf{P}}_{\text{TCP},i}^{l}, \bar{\mathbf{V}}_{k}^{l-1}) / \tau )},
\end{equation}
where $\bar{\mathbf{P}}_{\text{TCP},i}^{l} \in \mathbb{R}^d$ is the sequence-pooled instance-aware TCP, 
$\bar{\mathbf{V}}_{k}^{l-1} \in \mathbb{R}^d$ is the pooled image representation from layer-($l-1$) of image-available sample $x_k$ in current batch.
However, such a constraint may encourage the TCPs to collapse toward the remaining modality, which contradicts our design principle of overcoming the Implicit Modality Degradation bottleneck.
Consequently, we formulate this constraint within the scope of intra-modality and propose the intra-modal latent consistency regularization (Eq.~(\ref{eq:9})), which performs the consistency regularization of the instance-aware TCPs in the text modality space.
Notably, this regularization requires the \textbf{modality-complete samples} under dual-modality scenarios.

Specifically, following prior studies~\cite{hu2024deep}, we insert all types of MCPs for each sample without considering sample-specific modality-missing conditions (i.e., image-only, text-only or complete). 
Consequently, for a modality-complete sample where the consistency regularization can be applied, the regularization constraint $L_{\text{CR}}$ is simultaneously applied to optimize both type of MCPs (i.e., TCPs and Image-Contextualized Prompts (ICPs)), which leads to more effective supervision.
Specifically, the mathematical formulation of $L_{\text{CR}}$ for a modality-complete sample $x_j$ is given by:
\begin{equation}
\label{eq:17}
    L_{\text{CR}} = -\log \frac{\exp(\text{sim}(\bar{\mathbf{P}}_{\text{TCP},j}^{l}, \mathbf{\bar{T}}_{j}^{l-1}) / \tau)}{\sum_{k=1}^{{B}_{t}} \exp( \text{sim}(\bar{\mathbf{P}}_{\text{TCP},j}^{l}, \bar{\mathbf{T}}_{k}^{l-1}) / \tau )} -\log \frac{\exp(\text{sim}(\bar{\mathbf{P}}_{\text{ICP},j}^{l}, \mathbf{\bar{V}}_{j}^{l-1}) / \tau)}{\sum_{k=1}^{{B}_{v}} \exp( \text{sim}(\bar{\mathbf{P}}_{\text{ICP},j}^{l}, \bar{\mathbf{V}}_{k}^{l-1}) / \tau )},
\end{equation}
where $B_t$ and $B_v$ are the number of text-available or image-available samples in the current batch.

\section{Extension of \M to Multiple Modalities}
\label{app:extension_to_multiple}
In the main paper, we provide the derivation of \M under dual-modal setting for clarity.
In this section, we extend \M to the general multi-modal setting.
We consider a model with $K$ modalities, denoted as $\{m_k\}_{k=1}^{K}$, where $K \geq 2$.
This extension does not require any modification to the backbone MT or the design of \M, but only requires adapting the formulation of instance-aware prompt instantiation to the multiple modalities setting.
Specifically, for instance-aware prompt instantiation, we take the MCP associated with missing-modality $m_k$ as an example.
Let $\mathbf{C}_{t} \subseteq \{m_1, \dots, m_K\} \setminus \{m_k\}$ denote the set of remaining observed modalities for a given sample $x_i$.
The instance-aware prompt instantiation process is:
\begin{equation}
\label{eq:multi_inst}
\mathbf{P}_{\text{MCP-k},i}^{l}
\triangleq
\mathcal{I}\!\left(
\mathbf{P}_{\text{MCP-k}}^{l}
\mid
\mathbf{C}_{t}
\right)
=
\mathbf{P}_{\text{MCP}_k}^{l}
\odot
\mathcal{A}\!\left( 
\left\{
\sigma\!\left(\operatorname{MLP}_j(\bar{\mathbf{m}}^{l-1}_{i,j})\right)
\;\middle|\;
m_j \in \mathbf{C}_{t}
\right\}
\right),
\end{equation}
where $\mathbf{P}_{\text{MCP-k}}^{l}$ is the $l$-layer MCP for modality $m_k$, 
the $\mathbf{P}_{\text{MCP-k},i}^{l}$ is the instance-aware MCP-k for sample $x_i$, 
$\bar{\mathbf{m}}_{i,j}^{l-1}$ is the $l-1$ layer sequence-pooled representation for modality $m_j$ of sample $x_i$.
$\mathcal{A}(\cdot)$ is the aggregation function, with an example simple implementation using the average-based aggregation (employed in this study):
\begin{equation}
\mathcal{A}\!\left(\{ \mathbf{E}_j \mid m_j \in \mathbf{C}_{t} \}\right)
=
\frac{1}{|\mathbf{C}_{t}|}
\sum_{m_j \in \mathbf{C}_{t}} \mathbf{E}_j.
\end{equation}
Since MCP is designed to be modality-wise, \M is readily extended to multiple modalities scenarios, incurring only linear overhead with respect to the number of modalities.

\section{Complexity Analysis of \M}
\label{app:complex_analysis}
In this section, we provide a complexity analysis of \M.
Specifically, we decompose the analysis into three stages: \ding{172} Offline modality collection construction and refinement,
\ding{173} Modal-Contextualized Prompt (MCP) construction, and
\ding{174} Instance-aware prompt instantiation of MCP.
Notably, we take the pipeline for the image-only samples as an example.

\begin{definition}Let $N_t$ denote the number of text-available training samples, $N'_t$ represent the number of refined text token representations (modality prototypes) in the collection after clustering ($N'_t \ll N_t$), $M$ is the number of prompt tokens (prompt length) per layer, $I$ is the number of clustering iterations, $d$ is the hidden dimension, $d$ denote the feature dimension, $l$ denote the feature sequence length, and $L$ is the total number of encoder layers in the MT.
\end{definition}

\subsection{Offline Modal-Specific Information Collection Construction and Refinement}
The offline component in \M is the construction and refinement of modality-specific representation collections.
Specifically, all text-available training samples are forwarded through the frozen MT to extract layer-wise pooled representations, forming the raw modality collection $\mathbf{C}_t^l$ at each layer.
Each self-attention layer has a computational complexity of
$\mathcal{O}(4 ld^2 + 2 l^2 d)$,
where the quadratic term $\mathcal{O}(l^2 \cdot d)$ dominates in practice and the original form is therefore simplified as $\mathcal{O}(l^2d)$.
Consequently, this step incurs a cost of
$\mathcal{O}(N_t \cdot L \cdot l^2d)$, where $L$ is usually a small positive constant in practical MT backbones, and it can be reduced to $\mathcal{O}(N_t \cdot l^2d)$.
To further improve efficiency, we apply K-means clustering to refine the raw collections into $N'_t$ semantic prototypes per layer.
The K-means refinement step incurs a computational cost of
$\mathcal{O}(N_t \cdot N'_t \cdot d \cdot I)$,
In practice, $I$ is a bounded constant.
Moreover, this stage is only conducted once before the training, and incurs zero inference time overhead.
We empirically observe that on the MM-IMDb dataset under 70\% text missing, this state only costs about 3.4 minutes, which is equivalent to just adding a \textit{single} training epoch (about 3.5 minutes).

\subsection{MCP Construction}

In the following, we analyze the three MCP (TCP) construction strategies separately.

\textbf{Attention-based Construction}.
In this method, $M$ learnable prompt tokens attend to the refined text-modal information collection via cross-attention. For each layer, the dominant cost arises from computing attention between $M$ queries and $N'_t$ keys, resulting in a complexity of
$\mathcal{O}(M \cdot N'_t \cdot d)$.
Across all L layers, the total MCP construction cost is
$\mathcal{O}(L \cdot M \cdot N'_t \cdot d)$.
This cost is independent with each sample, and does not scale with the number of samples processed.

\textbf{MLP-based Construction.}
The MLP-based method applies a shared Multi-Layer Perceptron to each refined text token (prototype) followed by adaptive pooling.
The dominant computation stems from the MLP transformation over $N'_t$ prototypes, yielding a per-layer cost of
$\mathcal{O}(N'_t \cdot d^2)$,
and a total cost of
$\mathcal{O}(L \cdot N'_t \cdot d^2)$.
Compared to the attention-based method, this strategy trades higher computational cost for stronger non-linear modeling capacity, since $d > M$ in practical.

\textbf{Initialization-based Construction}.
The initialization-based method directly applies adaptive pooling over the refined text prototypes to obtain prompt initializations, without additional learnable transformations during construction.
This results in a per-layer complexity of
$\mathcal{O}(N'_t \cdot d)$,
and a total cost of
$\mathcal{O}(L \cdot N'_t \cdot d)$.
This method is the most computationally lightweight among the three and serves as an efficient alternative when computational resources are limited.

Notably, when extending to multiple modalities, since the MCPs are designed in a modality-wise manner, the overall computational overhead scales linearly with the number of modalities $W$, i.e., $\mathcal{O}(W)$.

\subsection{Instance-aware Prompt Instantiation}

Subsequently, MCPs are instantiated into instance-aware prompts conditioned on the remaining observed modalities and then used for prompt tuning.
Compared to conventional prompt tuning, \M introduces only a small amount of additional computation from the instance-aware instantiation step.
Specifically, for each sample and each layer, instantiation consists of a lightweight MLP projection followed by element-wise modulation of the prompt tokens.
The per-sample computational cost is dominated by the MLP projection, which scales as $\mathcal{O}(d^2)$, while the element-wise gating over $M$ prompt tokens incurs an additional $\mathcal{O}(M \cdot d)$ cost.
Therefore, the total per-sample instantiation overhead is $\mathcal{O}(d^2 + M \cdot d)$.

\section{Rethinking Prompt Tuning for Modality Missing Learning via Information Theory}
\label{app:mathmetical}
We provide an information-theoretic perspective to justify (i) why prompting mechanisms in existing methods are inherently restricted under the modality-reduced subspace (constituted using the remaining modalities), and (ii) how \M alleviates this restriction (IMR bottleneck) by introducing an explicit information-access path to modality-specific contextual priors distilled from training data.
For clarity, we analyze the inference-time information flow with the trained prompting mechanism fixed.
Let $v$ and $t$ denote the observed image modality and missing text modality, respectively.

\begin{lemma}[Information-Access Limitation of Observed-Only Prompting]
\label{lemma:bottleneck}
If a prompting mechanism generates prompts solely from the observed modality signal $z$ ($z \triangleq (t,\lunderline)$ or $z \triangleq (v,\lunderline)$~\cite{hu2024deep,zhao2025enhancing}) and instance-independent noise $\varepsilon$ (e.g., random initialization~\cite{lee2023multimodal,jang2024towards}):
\begin{equation}
\mathbf{P} = \mathcal{G}_\psi(z, \varepsilon), \quad \text{where } \varepsilon \perp (z, t),
\end{equation}
$\mathcal{G}_\psi(\cdot)$ is the prompt construction function, which takes the signal $z$ to drive the generation of prompts.
Then the prompt $\mathbf{P}$ provides no \textit{instance-wise} information sources of missing text modality beyond what is already contained in $z$:
\begin{equation}
I(t; \mathbf{P} \mid z) = 0.
\end{equation}
\end{lemma}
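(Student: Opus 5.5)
The plan is to treat $\mathbf{P}$ as a random variable on the same probability space as $(z,t,\varepsilon)$, with the trained parameters $\psi$ held fixed as in the inference-time setting just described. Since $\mathbf{P}=\mathcal{G}_\psi(z,\varepsilon)$ is a measurable function of $(z,\varepsilon)$ alone, the data-processing inequality gives
\begin{equation*}
I(t;\mathbf{P}\mid z)\le I(t;(z,\varepsilon)\mid z)=I(t;\varepsilon\mid z).
\end{equation*}
It then suffices to show that $I(t;\varepsilon\mid z)=0$. Nonnegativity of conditional mutual information gives the matching lower bound.

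To establish the vanishing term, I will use the hypothesis $\varepsilon\perp(z,t)$, read as joint independence of $\varepsilon$ from the pair $(z,t)$. This gives the factorization $p(\varepsilon,z,t)=p(\varepsilon)\,p(z,t)$. Dividing by $p(z)$ yields $p(\varepsilon,t\mid z)=p(\varepsilon)\,p(t\mid z)$, and marginalizing over $t$ gives $p(\varepsilon\mid z)=p(\varepsilon)$. Together these say that $\varepsilon$ and $t$ are conditionally independent given $z$. Hence $I(t;\varepsilon\mid z)=0$.

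An alternative route is the chain rule:
\begin{equation*}
I(t;(z,\varepsilon))=I(t;z)+I(t;\varepsilon\mid z),
\end{equation*}
while also $I(t;(z,\varepsilon))=I(t;z)$ by independence. This reaches the same conclusion. A second option is to write $I(t;\mathbf{P}\mid z)=H(\mathbf{P}\mid z)-H(\mathbf{P}\mid z,t)$ and note that, given $z$ (and optionally $t$), the only remaining randomness in $\mathbf{P}$ is $\varepsilon$. Its conditional law is the same in both cases, so the two entropies coincide.

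The main obstacle is that the hypothesis must be \emph{joint} independence $\varepsilon\perp(z,t)$. Pairwise independence ($\varepsilon\perp z$ and $\varepsilon\perp t$) would not suffice, so I will state explicitly that the joint version is used. I will also point out that the remaining setups are covered as special cases. For random-initialization methods, $z$ is trivial or reduces to a structure indicator, and the result becomes $I(t;\mathbf{P}\mid z)=0$ with that coarse $z$. For sample-specific methods, $\varepsilon$ is degenerate, and the data-processing step alone closes the argument.

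Finally, I will remark on what is not covered. For continuous prompts where $\mathcal{G}_\psi$ is deterministic, the differential entropies may be degenerate. The data-processing formulation of mutual information avoids this problem, because mutual information remains well defined even when differential entropy is not. For that reason I will lead with the data-processing argument rather than the entropy-difference one.
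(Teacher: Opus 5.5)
Your proposal is correct and takes essentially the same approach as the paper. The paper's sketch argues that because $\mathbf{P}$ depends only on $z$ and on noise $\varepsilon\perp(z,t)$, one has $\mathbf{P}\perp t\mid z$ (the Markov chain $t\to z\to\mathbf{P}$), hence $I(t;\mathbf{P}\mid z)=0$; your data-processing reduction to $I(t;\varepsilon\mid z)$, together with deriving $\varepsilon\perp t\mid z$ from the joint independence hypothesis, spells out the step the paper asserts ``by construction.''
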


\begin{proof}[Proof sketch]
Although the parameters $\psi$ in the prompt construction function may encode dataset-level statistics acquired from training data, at \textit{inference time}, the instance-wise prompt $\mathbf{P}$ is generated solely from the observed modality signal $z$ (up to instance-independent noise $\varepsilon$).
By construction, this induces the conditional independence $\mathbf{P} \perp t \mid z$ (equivalently, the Markov chain $t \rightarrow z \rightarrow \mathbf{P}$), since $\varepsilon$ is independent of $(z,t)$.
Therefore, $I(t;\mathbf{P}\mid z)=0$.
This indicates that observed-modality-only prompting mechanism does not introduce an additional instance-wise information-access path to the information repositories of the missing modalities beyond the modality-reduced subspace.
\end{proof}

Lemma~\ref{lemma:bottleneck} shows a mechanistic limitation of observed-only prompting methods, where prompt generation process depends solely on the observed signal $z$ (Implicit Modality-Reduction bottleneck).
We now show that our \M alleviates this limitation by introducing an additional conditioning variable $\mathbf{C}_t$.

\begin{proposition}[\M Establishes an Explicit Information-Access Path]
\label{prop:AOEPT}
\M generates prompts by conditioning on both the observed modality signal $z$ and a modality (text)-specific information repository $\mathbf{C}_t$ distilled from training data:
\begin{equation}
\mathbf{P}=\mathcal{G}_\psi(z,\mathbf{C}_t).
\end{equation}
Under a mild non-degeneracy condition that the prompt generation function $\mathcal{G}_\psi$ does not ignore $\mathbf{C}_t$ given $z$, we have
\begin{equation}
I(\mathbf{P};\mathbf{C}_t\mid z)>0,
\end{equation}
which indicates that the information-access path from prompts to modality-specific information repositories is established.
\end{proposition}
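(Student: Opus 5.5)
The plan is to make the ``mild non-degeneracy condition'' precise and then reduce the claim to the standard fact that conditional mutual information vanishes exactly under conditional independence. First, I fix a probabilistic model in the style of Lemma~\ref{lemma:bottleneck}. The repository $\mathbf{C}_t$ is treated as a random variable, since it is built from the random training draw through the frozen forward pass, K-means (Eq.~(\ref{eq:3})) and the construction of Eq.~(\ref{eq:4}). The observed signal $z$ comes from a test instance. With $\psi$ fixed at inference, the prompt $\mathbf{P}=\mathcal{G}_\psi(z,\mathbf{C}_t)$ is then a deterministic measurable function of the pair $(z,\mathbf{C}_t)$. I will state the non-degeneracy condition as follows: there is a set of $z$-values with positive probability on which the map $c\mapsto\mathcal{G}_\psi(z,c)$ is not almost surely constant under the conditional law of $\mathbf{C}_t$ given $z$. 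Equivalently, $\mathbf{P}$ is not $\sigma(z)$-measurable modulo null sets.

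Second, I expand the conditional mutual information through entropies:
\begin{equation}
I(\mathbf{P};\mathbf{C}_t\mid z)=H(\mathbf{P}\mid z)-H(\mathbf{P}\mid z,\mathbf{C}_t)=H(\mathbf{P}\mid z).
\end{equation}
The last term vanishes because $\mathbf{P}$ is a function of $(z,\mathbf{C}_t)$. For discrete or discretized variables, $H(\mathbf{P}\mid z)=\mathbb{E}_z[H(\mathbf{P}\mid z=z_0)]$. By the non-degeneracy condition, on a positive-probability set of $z_0$ the conditional law of $\mathbf{P}$ is not a point mass, so $H(\mathbf{P}\mid z=z_0)>0$ there, and the expectation is strictly positive.

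For continuous representations I would argue differently, because differential entropy can be negative. Here I use the characterization $I(\mathbf{P};\mathbf{C}_t\mid z)=\mathbb{E}_z\,D_{\mathrm{KL}}\big(p(\mathbf{P},\mathbf{C}_t\mid z)\,\|\,p(\mathbf{P}\mid z)p(\mathbf{C}_t\mid z)\big)$, which is zero if and only if $\mathbf{P}\perp\mathbf{C}_t\mid z$. Suppose conditional independence held. Since $\mathbf{P}$ is a function of $\mathbf{C}_t$ given $z$, it would then be independent of itself conditionally on $z$, hence almost surely constant given $z$. That contradicts non-degeneracy.

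Third, I would remark that this is the exact counterpart of Lemma~\ref{lemma:bottleneck}. There the Markov chain $t\to z\to\mathbf{P}$ forces the conditional mutual information to be zero. Here the extra argument $\mathbf{C}_t$ opens a second path. I would also note that the instantiation $\mathbf{P}^l_{\text{TCP}}\odot\sigma(\operatorname{MLP}(\bar{\mathbf{V}}))$ in Eq.~(\ref{eq:8}) satisfies the condition generically. The sigmoid gate is strictly positive, so any change in $\mathbf{P}^l_{\text{TCP}}$ induced by a change in $\mathbf{C}_t$ through the attention of Eq.~(\ref{eq:4}) propagates to $\mathbf{P}$. The residual learnable part cannot cancel this unless the attention output is constant in the keys and values.

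The main obstacle is the modeling step, not the inequality. $\mathbf{C}_t$ is computed once from the training set, so at inference it is a fixed object. If it is treated as deterministic, $I(\mathbf{P};\mathbf{C}_t\mid z)=0$ trivially. I would therefore explicitly take the randomness over the draw of the training set, or over the resampling of the repository. With that choice, the non-degeneracy condition becomes a genuine assumption on sensitivity to the data.
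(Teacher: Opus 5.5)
Your proposal is correct and follows the same route as the paper's sketch: the non-degeneracy assumption rules out $\mathbf{P}\perp\mathbf{C}_t\mid z$, and conditional mutual information vanishes only under conditional independence, so $I(\mathbf{P};\mathbf{C}_t\mid z)>0$. You make this rigorous by stating the non-degeneracy condition precisely, giving the ``independent of itself, hence a.s.\ constant'' step, and treating the discrete and continuous cases. You also correctly flag something the paper's sketch leaves implicit: $\mathbf{C}_t$ must be modeled as random over the training draw, since for a fixed repository the quantity is trivially zero.
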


\begin{proof}[Proof sketch]
In existing methods (Lemma~\ref{lemma:bottleneck}), the prompt $\mathbf{P}$ is generated solely as a function of the observed signal $z$ and instance-independent noise $\varepsilon$, which implies that $\mathbf{P}$ is statistically independent of the modality-specific repository $\mathbf{C}_t$ given $z$.
In contrast, \M explicitly introduces $\mathbf{C}_{t}$ as a necessary conditioning variable in the prompt generation process, i.e., $\mathbf{P} = \mathcal{G}_\psi(z, \mathbf{C}_{t})$.
Under a mild non-degeneracy assumption that $\mathcal{G}_\psi$ does not ignore $\mathbf{C}_t$ given $z$ (which we empirically validate via NM\textsuperscript{2}I analysis), the generated prompt $\mathbf{P}$ necessarily depends on $\mathbf{C}_t$, leading to $I(\mathbf{P}; \mathbf{C}_{t} \mid z) > 0$.
\end{proof}

\paragraph{Remark.}
Proposition~\ref{prop:AOEPT} does \textit{not} imply that \M recovers the exact instance-level missing data $t$.
Instead, it formalizes that \M builds a valid \textit{information-access path} via $\mathbf{C}_{t}$, enabling the MTs to access to the information repositories of the missing modalities beyond the observed, reduced modality subspace, alleviating the Implicit Modality-Reduction bottleneck.

\section{Detailed Experimental Setup}
\label{app:experimental_setup}
In this section, we provide detailed experimental setup, including the \ding{172} dataset descriptions, \ding{173} baseline descriptions and implementations, \ding{174} MT backbone descriptions and implementations, and \ding{175} the implementation details.

\subsection{Benchmarks}
To fully evaluate the effectiveness of \M, we compare it with four benchmarks. Specifically, following prior study~\cite{lee2023multimodal}, we first evaluate it on three dual-modal benchmarks: \ding{182} \textbf{MM-IMDb}~\cite{arevalo2017gated},
\ding{183} \textbf{HateMemes}~\cite{kiela2020hateful},
and \ding{184} \textbf{Food101}~\cite{wang2015recipe}.
We also evaluate \M on a tri-modal benchmark \ding{185} \textbf{IEMOCAP}~\cite{busso2008iemocap} to showcase its effectiveness in extending to multiple modalities.
Below, we present the dataset descriptions.

$\triangleright$ \textbf{MM-IMDb} is a multimodal dataset designed for movie genre classification. 
It comprises two distinct modalities: visual (movie poster images) and textual (plot summaries). 
This dataset is primarily used for a multi-label classification, as each movie can be associated with multiple genres simultaneously.
Following prior work~\cite{lee2023multimodal,hu2024deep}, we adopt F1-Macro (F1-M) as metric.

$\triangleright$ \textbf{HateMemes} focuses on identifying hate speech in memes via utilizing image and text modalities. 
To prevent the model from relying on a single modality, it is designed to make unimodal models more likely to fail by incorporating challenging samples known as ``benign confounders'', while simultaneously enhancing the performance of multimodal models.
Following prior work~\cite{lee2023multimodal}, we adopt AUC as metric.

$\triangleright$ \textbf{Food101} is a large-scale multimodal dataset designed for the multi-class classification task of food categories. 
This dataset uniquely pairs noisy image and text data across a diverse range of 101 food categories. 
Compiled using Google Image Search, it inherently incorporates real-world noise and variability, presenting both challenges and opportunities for robust model development in food recognition tasks. 
Following prior work~\cite{lee2023multimodal}, we adopt Accuracy (ACC) as metric.

$\triangleright$ \textbf{IEMOCAP} is a widely used benchmark for speech emotion recognition and multimodal affective computing.
It contains recorded videos from ten actors in five dyadic conversation sessions, and approximately 12 hours of data. 
Following previous works~\cite{tsai2019multimodal,wang2019words}, four emotions (happiness, anger, sadness and neutral state) are selected for emotion recognition, and we leverage the average accuracy (ACC) and F1-weighted score (F1) as evaluation metrics.

\subsection{Baseline Methods}
In this study, we compare \M with 5 competitive MT-oriented modality-missing baselines, including MAPs~\cite{lee2023multimodal}, DCP~\cite{hu2024deep}, RAGPT~\cite{lang2025retrievalaugmented}, MemPrompt~\cite{zhao2025enhancing}, and SyP~\cite{zhang2025synergistic}.
Following, we provide detailed descriptions for each baseline model.

$\triangleright$ \textbf{MAPs} introduces missing-aware prompts that are strategically placed at various locations within MTs to address scenarios involving missing modalities.
Specifically, it designs two types of prompt insertion strategies: attention and input level.

$\triangleright$ \textbf{DCP} enhances missing-modality robustness by designing prompts that explicitly capture correlations between prompt signals and input features, as well as inter-layer prompt relationships. 
Specifically, DCP incorporates correlated, dynamic, and modal-common prompts that better leverage modality complementarity for varying missing cases.

$\triangleright$ \textbf{RAGPT} introduces a retrieval-augmented prompt tuning framework where similar instances are retrieved to recover missing modality information and generate context-aware prompts, at the cost of additional instance-wise multimodal retrieval and reconstruction modules.  

$\triangleright$ \textbf{MemPrompt} introduces a memory-driven prompting framework to adaptively compensate for missing modalities. 
It uses a prompt memory storing modality-specific semantic information to retrieve semantically similar cues (generative prompts) and shared prompts to exploit cross-modal compensation from observed modalities.

$\triangleright$ \textbf{SyP} employs a synergistic prompting strategy that jointly learns static and input-conditioned dynamic prompts via adaptive scaling, enabling more flexible adaptation to diverse missing patterns.

For the MM-IMDb dataset, we re-run all baseline methods instead of directly reporting the numbers from prior papers, with the reason:
We observe an inconsistency in the public implementations, where individual movie plots are treated as separate samples while the missing rate is controlled at the movie level, resulting in a deviation between the specified and actual missing rates (e.g., a movie with multiple plots will be duplicated into several samples, while all duplicated samples sharing the same missing-modality label).
To ensure a fair and controlled comparison, we reproduce all baseline results on MM-IMDb under a unified preprocessing pipeline, where each movie is treated as a single data instance to accurately control the missing rate, rather than treating individual plots as separate samples.
This setting is also consistent with the original definition of the MM-IMDb dataset~\cite{arevalo2017gated}.
For all baselines on other datasets, we report the results of these datasets directly from their original papers when available.
We additionally reproduce the results for backbones that are not reported in the original papers using the official implementations.

In the main performance evaluation, we report a Lower Bound (\textbf{LB}) baseline to assess the inherent robustness of the MT backbones and to quantify the performance gains brought by prompt tuning methods under modality-missing scenarios.
Specifically, the MT backbones are trained and evaluated under the same missing-rate and missing-type settings as all comparison methods. 
The \textit{only difference} is that training is restricted to the trainable components of the MT backbone (as described in the next section) and the task-specific classifier, without introducing any learnable prompts.
To ensure a fair comparison, the training protocol and hyper-parameters strictly follow those of MAPs~\cite{lee2023multimodal}.

\subsection{MT Backbones}
In this study, to evaluate the scalability of our \M, we first adopt two dual-modal MT backbones, including a double-stream MT \ding{182} \textbf{CLIP ViT-B/16}~\cite{radford2021learning} and a single-stream MT \ding{183} \textbf{ViLT}~\cite{kim2021vilt}.
Moreover, we also adopt a tri-modal MT backbone \ding{184} \textbf{MulT}~\cite{tsai2019multimodal} to showcase the effectiveness of \M in extending to multiple modalities.
Below, we provide a detailed implementation the backbones:

$\triangleright$ \textbf{CLIP:}
For CLIP, we adopt the pretrained ViT-B/16 variant following prior studies~\cite{hu2024deep}. 
During training, the complete CLIP model remains frozen while the modality-specific projection layer and final layer-norm are trainable parameters. 
Following prior work~\cite{hu2024deep}, the task-specific classifier consists of a single-layer MLP.

$\triangleright$ \textbf{ViLT:}
For ViLT, we adopt the pretrained model following existing studies~\cite{lee2023multimodal}. 
During training, the full ViLT model is frozen while the pooler layer remains trainable. 
Following prior studies~\cite{lee2023multimodal}, the task-specific classifier is implemented as a two-layer MLP.

$\triangleright$ \textbf{MulT:}
 For MulT, we adopt the model architecture and pretrain the model on the MOSI~\cite{zadeh2016multimodal} dataset. 
 During pretraining, all parameters are trainable and optimized using Adam with learning rate $1 \times 10^{-3}$ for $40$ epochs. 
 Then for the target dataset IEMOCAP~\cite{busso2008iemocap} (i.e., the one that we evaluate the modality-missing performance), the modality-specific projection layers are reinitialized to adapt to the dataset input dimensions. The classification head is implemented as a two-layer MLP with residual connections.

\subsection{Implementation Details}
We set the refined collection capacity $N'_t$ is set to 256 for efficiency.
The prompt length $M$ is selected from \{8, 12, 16, 20, 24\} and tuning depth $N$ is selected from \{1, 3, 6, 9, 12\}.
Clustering iteration number is 300.
We train \M using the AdamW optimizer~\cite{loshchilov2019decoupled} with a learning rate of $1 \times 10^{-2}$ and a weight decay of $2 \times 10^{-2}$ for $20$ epochs.
Following prior studies~\cite{hu2024deep}, we insert all type of MCPs into each sample.
For the missing modalities, we follow prior studies~\cite{lee2023multimodal}. 
Specifically, we set the input text to an empty string for text-missing samples and set all pixel values to ones for image-missing samples. 
For the missing tables in all experiments, we randomly generate three fixed missing tables for each experimental combination of dataset, missing rate, and missing type. We evaluate our method and all baselines three times (once per missing table) and report the average performance across these three runs.
Following prior work~\cite{hu2024deep}, we adopt bottleneck MLP for efficiency.
All experiments are conducted on servers equipped with NVIDIA GeForce RTX 4090 GPUs.

\section{Additional Experimental Results}
\label{app:additional-exp}

\begin{table*}[t]  
\caption{Performance of \M using different down-sampling strategies on three datasets under a 70\% missing rate.} 
    \label{tab-app:down-sampling-strategy}
\centering  
\vspace{-2mm}
\setlength{\tabcolsep}{12.5pt}
\renewcommand{\arraystretch}{1.1}
\resizebox{\textwidth}{!}{%
\begin{tabular}{lcccccccccc}  
\toprule  
 & & \multicolumn{3}{c}{\textbf{MM-IMDb}} & \multicolumn{3}{c}{\textbf{HateMemes}} & \multicolumn{3}{c}{\textbf{Food101}} \\  
\cmidrule(lr){3-5} \cmidrule(lr){6-8} \cmidrule(lr){9-11}  
 & & \textbf{Text} & \textbf{Image} & \textbf{Both} & \textbf{Text} & \textbf{Image} & \textbf{Both} & \textbf{Text} & \textbf{Image} & \textbf{Both} \\  
\cmidrule(lr){3-3} \cmidrule(lr){4-4} \cmidrule(lr){5-5} 
\cmidrule(lr){6-6} \cmidrule(lr){7-7} \cmidrule(lr){8-8}
\cmidrule(lr){9-9} \cmidrule(lr){10-10} \cmidrule(lr){11-11}

\textbf{Method} & \textbf{DS Strategy} 
& F1-M & F1-M & F1-M 
& AUC & AUC & AUC 
& ACC & ACC & ACC \\    

\midrule

\multirow{2}{*}{\makecell[c]{\M}}  
& w/ Pooling  
& 50.67 & 53.64 & 50.16 
& 70.23 & 66.53 & 68.94 
& 79.66 & 87.46 & 82.64 \\

& w/ Clustering  
& \textbf{51.50} & \textbf{54.86} & \textbf{53.31}  
& \textbf{71.12} & \textbf{67.96} & \textbf{69.80} 
& \textbf{80.77} & \textbf{88.86} & \textbf{83.24} \\

\bottomrule
\end{tabular} 
\vspace{-10mm}
}
\end{table*}

\subsection{Evaluation of Different Down-Sampling Strategies}
\label{app:exp-down-sampling}
In addition to the k-means clustering based down-sampling strategy adopted in the main paper for refining the original modal-specific information collections, we also explore an alternative, lightweight one: pooling-based down-sampling.
Specifically, we formalize the pooling-based down-sampling strategy as follows:
\begin{equation}
\label{eq:25}
\mathbf{\hat{t}}^{l}_{i}
=
\frac{1}{w}
\sum_{k=(i-1)w+1}^{iw}
\mathbf{t}^{l}_{k},
\quad i = 1, \ldots, N'_t,
\end{equation}
where $w$ is the window size, and $N'_t = \lfloor\frac{N_t}{w}\rfloor$, $\mathbf{\hat{t}}^l_i$ denotes $i$-th refined token representation.
The refined collection is then formalized as $\mathbf{\hat{C}}^{l}_t = \{\mathbf{\hat{t}}^{l}_{1},\mathbf{\hat{t}}^{l}_{2},\cdots,\mathbf{\hat{t}}^{l}_{N'_t}\}$, where $N'_t$ satisfies $N'_t \ll N_t$.
As shown in~\Cref{tab-app:down-sampling-strategy}, we observe that the lightweight pooling-based down-sampling strategy leads to inferior performance. 
However, the performance degradation is not pronounced. Consequently, this alternative strategy remains a viable option in resource-constrained scenarios.

Moreover, in the main paper, we set the capacity of the refined modality-specific information set, $N’_t$, to 64 for efficiency considerations.
In this place, we further analyze larger values of $N’_t$ by scaling the number of refined tokens in the collections, in order to explore whether increased capacity can lead to additional performance gains.
As presented in~\Cref{tab-app:hyper_N}, we empirically observe that increasing the collection capacity yields only marginal performance gains for \M. 
Consequently, we set $N’_t$ to 256 to strike a balance between performance and efficiency.

\begin{table*}[!t]
\centering
\caption{Performance of \M when scaling the capacity of the down-sampled modal-information collection on the MM-IMDb dataset.}
\label{tab-app:hyper_N}
\setlength{\tabcolsep}{5.5pt}
\resizebox{\textwidth}{!}{%
\begin{tabular}{ccccccccccccccccc}
\toprule
 & \multicolumn{8}{c}{\textbf{Text Missing}} & \multicolumn{8}{c}{\textbf{Image Missing}} \\
\cmidrule(lr){2-9} \cmidrule(lr){10-17}
$N'_t$ 
& 16 & 32 & 64 & 128 & 256 & 512 & 1024 & 2048
& 16 & 32 & 64 & 128 & 256 & 512 & 1024 & 2048 \\
\midrule
\textbf{F1-M} 
& 50.40 & 50.10 & 50.20 & 50.00 & 51.50 & 51.10 & 50.30 & \textbf{51.70}
& 54.50 & 54.40 & 53.80 & 54.20 & 54.86 & 55.06 & \textbf{55.26} & 55.16 \\
\bottomrule
\end{tabular}}
\end{table*}

\begin{table}[t]
    \caption{Ablation study of \M~under 70\% image / both missing conditions.}
    \centering
    \label{table-app:ablation}
    \setlength{\tabcolsep}{16pt}
\resizebox{\textwidth}{!}{%
    \begin{tabular}{lcccccc}
        \toprule
        & \multicolumn{3}{c}{\textbf{Image Missing}} 
        & \multicolumn{3}{c}{\textbf{Both Missing}} \\
        \cmidrule(lr){2-4} \cmidrule(lr){5-7}

        & \textbf{MM-IMDb} 
        & \textbf{HateMemes} 
        & \textbf{Food101}
        & \textbf{MM-IMDb} 
        & \textbf{HateMemes} 
        & \textbf{Food101} \\
        \cmidrule(lr){2-2} \cmidrule(lr){3-3} \cmidrule(lr){4-4}
        \cmidrule(lr){5-5} \cmidrule(lr){6-6} \cmidrule(lr){7-7}

        \textbf{Variant} 
        & F1-M & AUC & ACC
        & F1-M & AUC & ACC \\
        \midrule

        w/o MCP 
        & 52.68 & 64.63 & 87.40
        & 50.10 & 67.27 & 81.39 \\

        w/o Instantiation 
        & 53.37 & 64.68 & 87.53
        & 51.41 & 64.58 & 81.93 \\

        w/o Consistency 
        & 53.78 & 66.05 & 87.53
        & 51.93 & 68.15 & 82.98  \\

        w/ Reconstruction 
        & 35.52 & 65.49 & 80.68
        & 36.79 & 66.87 & 77.64 \\

        \midrule
        \rowcolor{tablecolor}
        \textbf{\M}
        & \textbf{54.86} & \textbf{67.96} & \textbf{88.86}
        & \textbf{53.11} & \textbf{69.80} & \textbf{83.24} \\
        \bottomrule
    \end{tabular}}

\end{table}

\subsection{Additional Ablation Studies}
\label{app:add-ablations}
In this section, we provide the ablative results of \M under image and both modality missing scenarios in~\Cref{table-app:ablation}.
From the results, we draw the similar conclusion as in the main paper.

\subsection{Additional Modality Information Scaling Evaluation}
\label{app:add-scaling}
We provide the evaluation of the modality information scaling under the image and both missing conditions in~\Cref{fig-app:scaling}.
And we observe that, \M can benefit from the additional available training information from the missing modality (decreasing training missing rate), while the performance of baselines plateau.

\subsection{Additional NM\textsuperscript{2}I Evaluation}
\label{app:add-nm2i}
We additionally provide the results of NM\textsuperscript{2}I evaluation on the HateMemes and Food101 datasets in~\Cref{fig-app:nm2i}.
And we observe that, \M achieves the clearly higher NM\textsuperscript{2}I values across these two datasets comparing to the baselines.

\begin{figure*}[!t]
    \centering
    \begin{subfigure}[b]{0.49\textwidth}
        \centering
        \includegraphics[width=\textwidth]{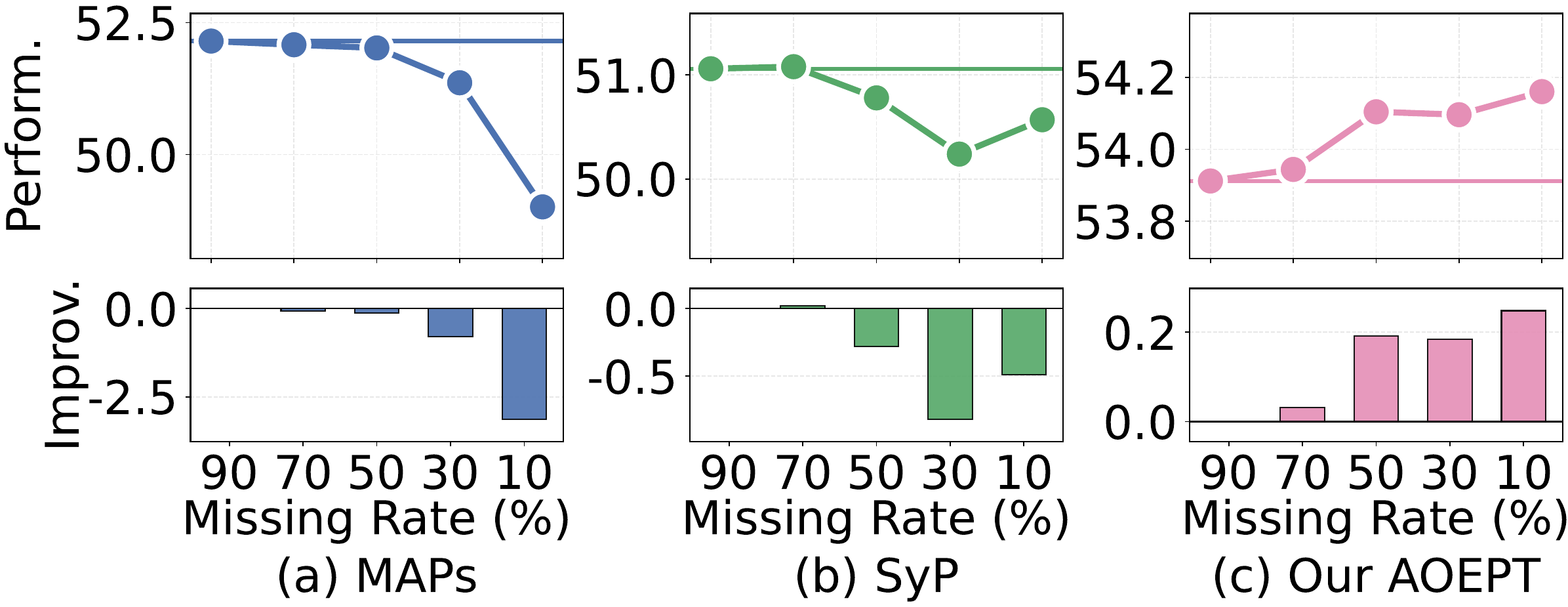}
        \caption{Image Missing.}
    \end{subfigure}
    \hfill
    \begin{subfigure}[b]{0.49\textwidth}
        \centering
        \includegraphics[width=\textwidth]{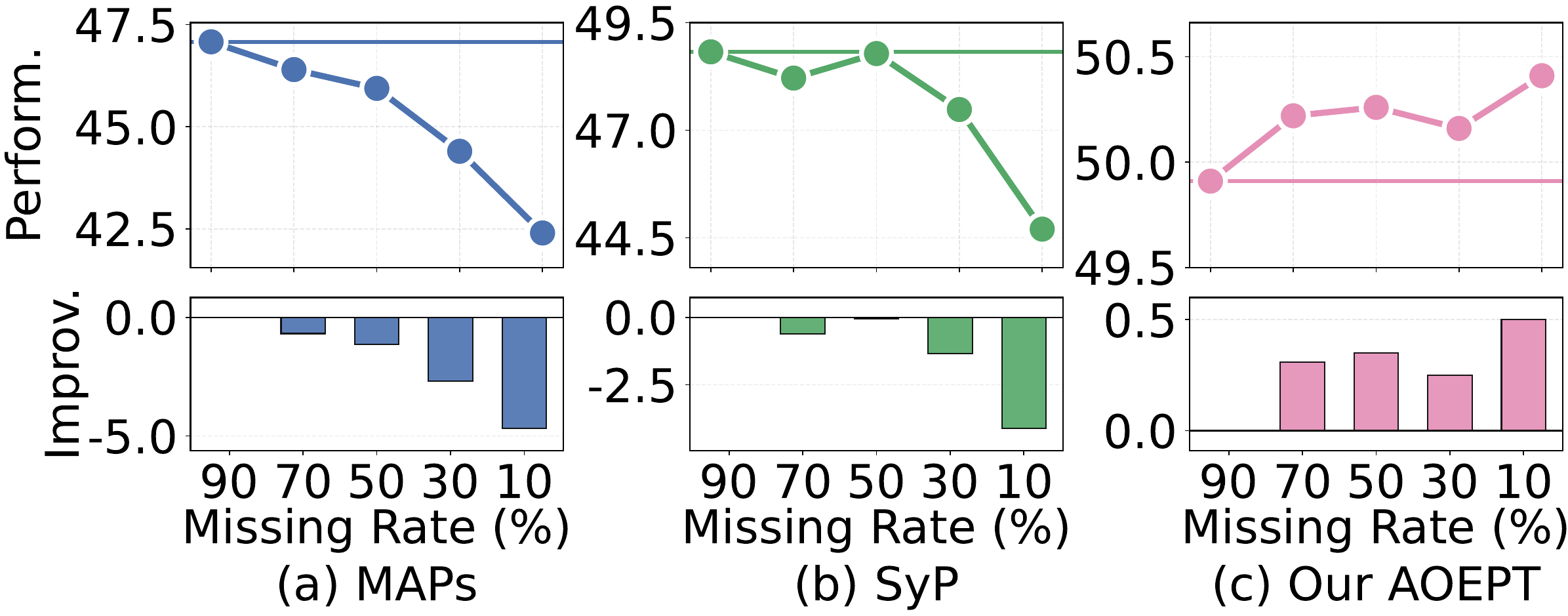}
        \caption{Both Missing.}
    \end{subfigure}

    \caption{Performance of \M and baseline methods under continually decreasing training modality-missing rates.}
    \label{fig-app:scaling}
    \vspace{-5mm}
\end{figure*}

\begin{figure*}[!t]
    \centering
    \begin{subfigure}[b]{0.245\textwidth}
        \centering
        \includegraphics[width=\textwidth]{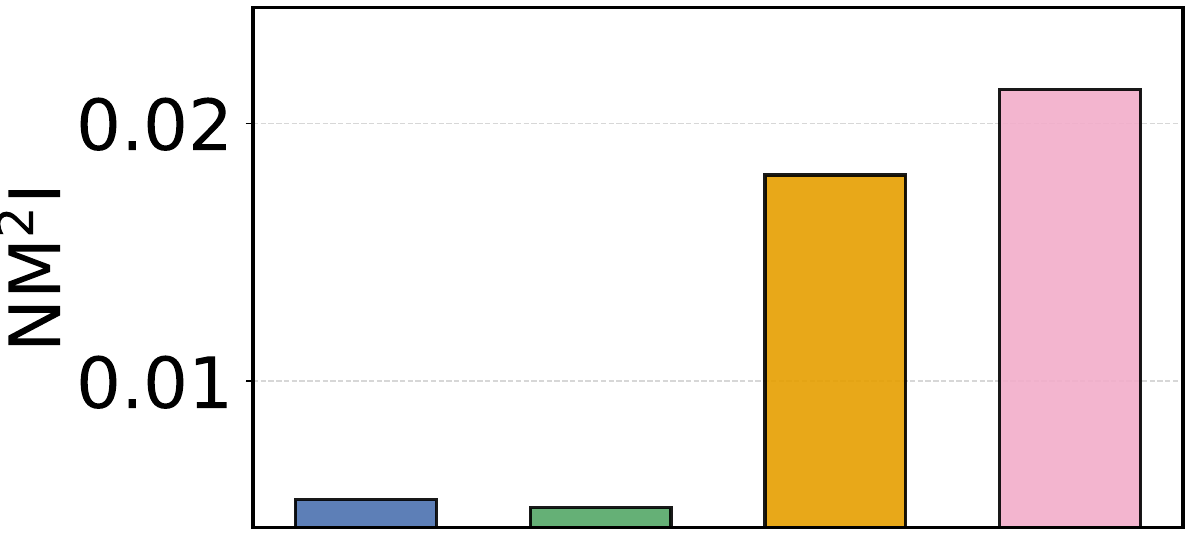}
        \caption{Text Missing (HateMemes).}
    \end{subfigure}
    \hfill
    \begin{subfigure}[b]{0.245\textwidth}
        \centering
        \includegraphics[width=\textwidth]{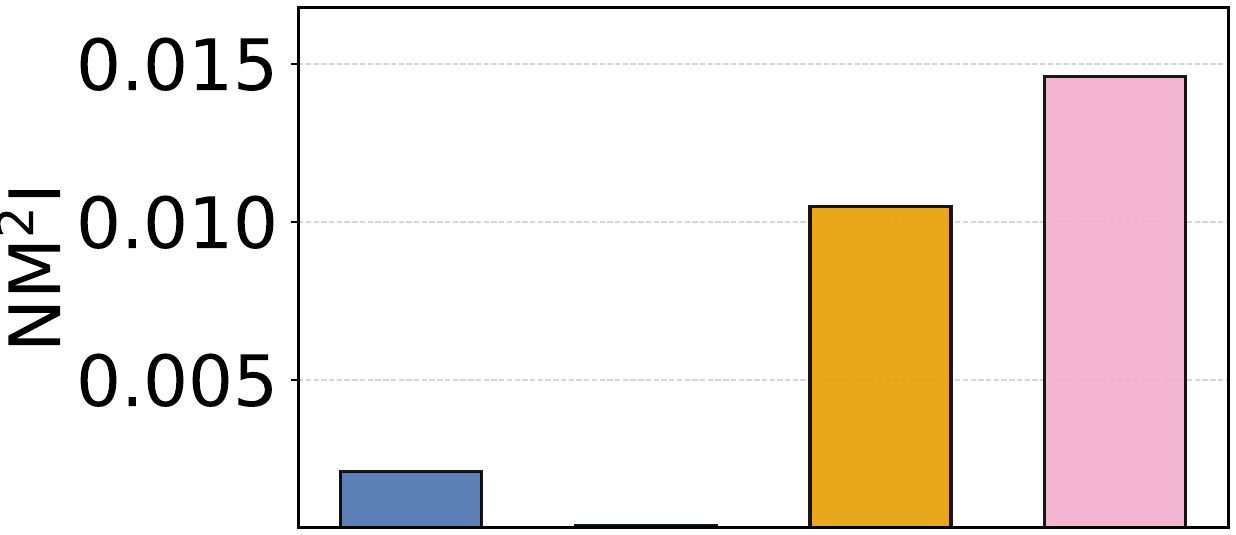}
        \caption{Image Missing (HateMemes).}
    \end{subfigure}
    \hfill
    \begin{subfigure}[b]{0.245\textwidth}
        \centering
        \includegraphics[width=\textwidth]{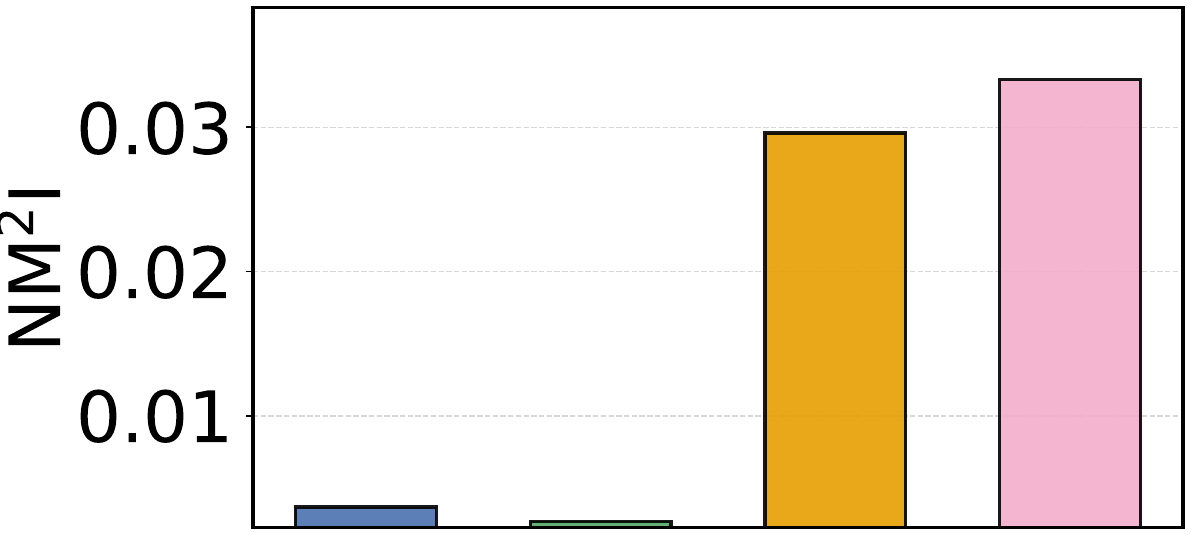}
        \caption{Text Missing (Food101).}
    \end{subfigure}
    \hfill
    \begin{subfigure}[b]{0.245\textwidth}
        \centering
        \includegraphics[width=\textwidth]{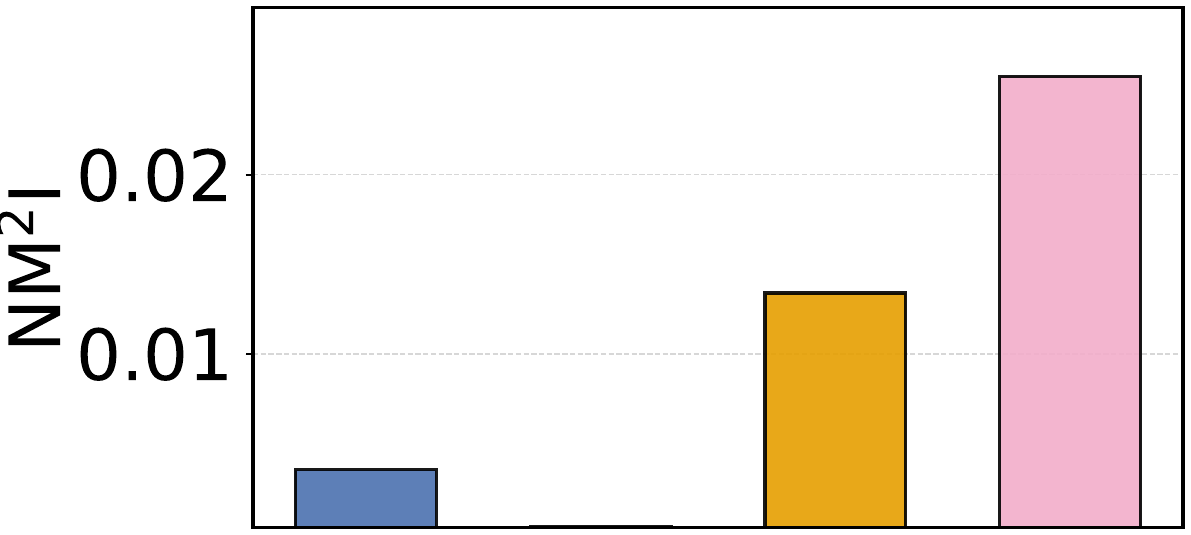}
        \caption{Image Missing (Food101).}
    \end{subfigure}

    \caption{NM\textsuperscript{2}I comparison of \M and baselines on the HateMemes and Food101 dataset under different missing-modality settings. Notably, the legend for this experiment is the same as the main paper, where the first two columns are baseline models MAPs and SyP, respectively, the third column is \M w/o Instantiation, and the final column is \M.}
    \label{fig-app:nm2i}
    \vspace{-3mm}
\end{figure*}

\subsection{Performance of \M on Tri-Modal Benchmark}
\label{app:exp-tri-modal}
To further evaluate the effectiveness of \M in facing the modality-missing scenarios under multiple modalities setting, we conduct additional experiments on the tri-modal benchmark IEMOCAP using MulT~\cite{tsai2019multimodal} as MT backbone. 
We then compare it with baseline MAPs.
The IEMOCAP benchmark includes three modalities: audio (A), Video (V), and Text (T).
Consequently, we design two set of modality-missing protocols:
\ding{172} \textbf{Single-Modality Missing} at $\eta$\%: $\eta$\% of samples have exactly one modality missing (e.g., Audio indicates that the audio modality is missing), while the remaining samples are modality-complete.
\ding{173} \textbf{Double-Modality Missing} at $\eta$\%: $\eta$\% of samples miss two modalities simultaneously (e.g., Audio–Video indicates that only the text modality is available), while the remaining samples are complete.
As illustrated in~\Cref{tab-app:tri-modal}, \M outperforms all baselines across all missing settings.

\begin{table*}[t]
\caption{Performance (\%) under different modality-missing scenarios with a 70\% missing rate on the tri-modal benchmark IEMOCAP.
The best results are in \textbf{bold} and the second best are \underline{underlined}.}
\label{tab-app:tri-modal}
\centering

\setlength{\tabcolsep}{9.5pt}
\renewcommand{\arraystretch}{1.1}
\resizebox{\textwidth}{!}{
\begin{tabular}{lcccccccccccc}
\toprule

 &
 \multicolumn{2}{c}{\textbf{Audio}} &
 \multicolumn{2}{c}{\textbf{Video}} &
 \multicolumn{2}{c}{\textbf{Text}} &
 \multicolumn{2}{c}{\textbf{Audio-Video}} &
 \multicolumn{2}{c}{\textbf{Audio-Text}} &
 \multicolumn{2}{c}{\textbf{Video-Text}} \\

\cmidrule(lr){2-3}
\cmidrule(lr){4-5}
\cmidrule(lr){6-7}
\cmidrule(lr){8-9}
\cmidrule(lr){10-11}
\cmidrule(lr){12-13}

\textbf{Method}
& ACC & F1
& ACC & F1
& ACC & F1
& ACC & F1
& ACC & F1
& ACC & F1 \\

\midrule
\rowcolor{gray!10}
LB (\textbf{MulT}) &
53.41 & 53.40 & 57.46 & 54.63 & 56.50 & 54.28 & \underline{54.90} & \underline{54.05} & 45.95 & 41.65 & 55.33 & 52.12 \\

MAPs~\cite{lee2023multimodal} &
\underline{54.16} & \underline{53.64} & \underline{59.81} & \underline{57.89} & \underline{57.89} & \underline{55.35} & 50.85 & 50.45 & \underline{46.27} & \underline{42.32} & \underline{55.86} & \underline{52.44} \\

\rowcolor{tablecolor} \textbf{\M} &
\textbf{55.12} & \textbf{54.57} & \textbf{61.73} & \textbf{59.60} & \textbf{58.64} & \textbf{56.81} & \textbf{56.18} & \textbf{55.69} & \textbf{48.08} & \textbf{44.66} & \textbf{59.70} & \textbf{55.63}
 \\

\bottomrule
\end{tabular}
}
\end{table*}

\section{Relationship between \M and Traditional Modality Missing Learning Methods}
\label{app:discussion-traditional-iml}
Traditional modality-missing learning methods mainly fall into two categories: 
Unified multimodal learning methods~\cite{wang2023multi,zhao2021missing,kim2024missing}, and Modality imputation methods~\cite{cai2018deep,ma2021smil, wang2023incomplete, wang2023distribution}.
Despite their different technical implementations, these methods share a common objective:
to preserve and exploit information from all modalities, even when some of them are absent at inference time.
Unified multimodal learning methods aim to learn representations that are robust to modality absence by enforcing alignment or invariance across modalities.
Modality imputation methods, on the other hand, explicitly recover the missing modality mainly through cross-modal generation, and then rely on the reconstructed signals (ground-truth representations of the missing modalities) for training.
While effective, both paradigms often require tailored, specific architectural designs and additional networks.

In contrast, \M does not explicitly enforce modality invariance nor perform explicit modality reconstruction.
Instead, it internalizes the core principle underlying these traditional approaches: preserving access of MTs to information repositories of the missing modalities when prompting the MTs.
By distilling global modality-wise contextual information into Modal-Contextualized Prompts and selectively instantiating them conditioned on the remaining modalities, \M enables MTs to implicitly access and leverage information from missing modalities without altering the backbone architecture or introducing heavy reconstruction modules.
From this perspective, \M can be viewed as a \textit{principled and lightweight instantiation of modality-missing learning under the MT framework}, which reformulates the core insights of traditional approaches into the unified and general multimodal model architecture (i.e., MTs).

\section{Literature Review for Prompt Learning}

Prompt learning, a parameter-efficient fine-tuning strategy that adapts large-scale pretrained frozen backbone models (e.g., CLIP~\cite{radford2021learning}) to downstream tasks by optimizing only a small set of learnable prompt parameters, has been widely adopted in the multimodal and computer vision communities~\cite{zhou2022learning,zhou2022conditional,liu2025surrogate,wang2025understanding}.
Pioneering study CoOP~\cite{zhou2022learning} introduced learnable prompt tokens into the language branch of CLIP, which are jointly optimized with image inputs to adapt the CLIP to downstream tasks, while CoCoOP~\cite{zhou2022conditional} further leveraged the image inputs as conditions to derive the sample-specific prompts.
Following studies such as ProGrad~\cite{zhu2023promptaligned} and KgCoOP~\cite{yao2023visuallanguage} further explore how to align learnable prompts with the pretrained knowledge encoded in CLIP, aiming to preserve its generalization ability during prompt tuning.
MaPLe~\cite{khattak2023maple} extends prompt learning to both the visual and language branches of CLIP, enabling joint multimodal adaptation for improved downstream performance.
DePT~\cite{zhang2024dept} decouples the pretrained base knowledge from task-specific adaptations during prompt tuning, mitigating interference between general and downstream-oriented representations.
SurPL~\cite{liu2025surrogate} learned a single base prompt and employs a lightweight surrogate feature generator to produce diverse prompted text features from it, bypassing the issue of enormous gradient computation inside the text encoder.
With the success of prompt learning in adapting vision–language models to downstream tasks, recent studies~\cite{lee2023multimodal,hu2024deep,zhao2025enhancing,lang2025retrievalaugmented} have begun to adopt this parameter-efficient strategy to enhance the robustness of Multimodal Transformers (MTs) under modality-missing scenarios, where the incomplete inputs and learnable prompts are fed to the MTs to perform the prompt tuning.
Building upon this line of research, we identify an inherent limitation in existing methods, namely Implicit Modality-Reduction, and propose \M, a lightweight missing-adaptive modal-contextualized prompting framework that effectively mitigates this bottleneck.

\end{document}